\theoremstyle{plain}
\newtheorem{Def}{Definition}
\newtheorem{eg}{Example}
\newtheorem{thm}{Theorem}
\newtheorem{lemma}{Lemma}
\newtheorem{cor}{Corollary}
\newtheorem{assumption}{Assumption}
\begin{document}
%
\title{A Generalization Theory based on Independent and Task-Identically Distributed Assumption}
%
%
%
%

\author{Guanhua~Zheng,
        Jitao~Sang,
        Houqiang~Li,
        Jian~Yu,
        and~Changsheng~Xu
\IEEEcompsocitemizethanks{\IEEEcompsocthanksitem G. Zheng is with University of Science and Technology of China (e-mail: zhenggh@mail.ustc.edu.cn).\protect\\
\IEEEcompsocthanksitem J. Sang and J. Yu are with the School of Computer and Information Technology and the Beijing Key Laboratory of Traffic Data Analysis and Mining, Beijing Jiaotong University, Beijing 100044, China (e-mail: \{jtsang, jianyu\}@bjtu.edu.cn).\protect\\
\IEEEcompsocthanksitem H. Li is with the Chinese Academy of Sciences Key Laboratory of Technology in Geo-Spatial Information Processing and Application System, Hefei 230026, China, and University of Science and Technology of China (e-mail:lihq@ustc.edu.cn).\protect\\
\IEEEcompsocthanksitem C. Xu is with the National Lab of Pattern Recognition, Institute of Automation, CAS, Beijing 100190, China, and University of Chinese Academy of Sciences(e-mail: csxu@nlpr.ia.ac.cn).
}
}

\IEEEtitleabstractindextext{%
\begin{abstract}
Existing generalization theories analyze the generalization performance mainly based on the model complexity and training process. The ignorance of the task properties, which results from the widely used IID assumption, makes these theories fail to interpret many generalization phenomena or guide practical learning tasks. In this paper, we propose a new Independent and Task-Identically Distributed (ITID) assumption, to consider the task properties into the data generating process. The derived generalization bound based on the ITID assumption identifies the significance of hypothesis invariance in guaranteeing generalization performance. Based on the new bound, we introduce a practical invariance enhancement algorithm from the perspective of modifying data distributions. Finally, we verify the algorithm and theorems in the context of image classification task on both toy and real-world datasets. The experimental results demonstrate the reasonableness of the ITID assumption and the effectiveness of new generalization theory in improving practical generalization performance.
\end{abstract}

\begin{IEEEkeywords}
non-IID, generalization theory
\end{IEEEkeywords}}

\maketitle

\IEEEdisplaynontitleabstractindextext

%
\IEEEpeerreviewmaketitle

\IEEEraisesectionheading{\section{Introduction}\label{sec:introduction}}
During the past decades, machine learning generalization studies have been devoted to exploring the factors influencing model performance on unseen data, i.e., the generalization performance. Statistical learning-based solutions can be mainly divided into two groups: (1) The first group is based on the model complexities. Early studies like VC dimension \cite{vapnik2015uniform} and Rademacher Complexity \cite{bartlett2002rademacher} pointed out that the model complexities serve as the sufficient condition of PAC learnability. (2) The second group considers the influencing factors in the training process. Typical theories claimed that model generalization performance depends on the characteristics of training process like uniform stability\cite{mukherjee2006learning}, robustness\cite{xu2012robustness}, and implicit or explicit regularization\cite{bartlett1998sample}\cite{hardt2015train}.

However, in recent years, both groups fail in explaining many generalization phenomena, and thus work poorly in providing practical guidance, especially in the context of the large-scale neural network models. To name a few, the typical neural network models like VGG\cite{simonyan2014very} usually contain parameters beyond the number of the training data but mostly achieve good generalization performance. This fact contradicts the theory in the first group that higher model complexity contributes to worse generalization performance. Regarding the second group, a recent study \cite{zhang2016understanding} constructs two models with unique hypothesis space and training process but achieves a completely different generalization performance. The failure in explaining these phenomena demonstrates that there exist factors influencing the model generalization performance beyond model complexity and training process.

It is easy to conceive that the property of tasks to be solved also influences on the generalization performance. The task property often manifests itself in the data generating process and the derived data distribution. Many existing studies have actually reported such evidences: the No Free Lunch theorem~\cite{wolpert1995no} claimed that model generalization performance largely depends on specific tasks, \cite{arpit2017closer} conducted a random test experiment and found that model generalization is closely related to the characteristics of the training dataset, and \cite{Lin2017Focal} succeeded to improve model generalization by considering the dataset characteristics such as handling data imbalance problem.

A natural question arises: why most generalization theories sidestep considering these task properties? We ascribe this to the commonly imposed Independent and Identically Distributed (IID) assumption. Under the IID assumption, the tasks and the dataset characteristics are no longer necessary in quantifying the discrepancy between the training and the testing distributions, which derives the generalization bound without considering the specific data distribution. However, the IID assumption suffers from two recognized issues: (1) it does not hold in most cases due to the poor coverage of training data and the complexity of real-world tasks. \cite{swaminathan2017new} demonstrated through experiments that most of the existing datasets are non-identically distributed. \cite{shi2017beyond} found that high correlations exist in both temporal and spatial samples. Applying IID-based theories on these non-IID tasks has demonstrated devastated performance\cite{swaminathan2017new}. ~(2) The IID assumption derives a task-free generalization bound which fails to guide the practical learning tasks. One example comes from data augmentation, which demonstrates its effectiveness in improving generalization performance while adds unrealistic data and inevitably changes the distribution of the training set\cite{Zhang2018mixup} \cite{zhong2017random}. Traditional IID-based generalization theories fail to explain the improved generalization from changed data distribution or theoretically guide the new design of data augmentation solutions.

In this paper, we bypass the IID assumption and introduce a new Independent and Task-Identically Distributed (ITID) assumption. The ITID assumption well considers the properties of tasks into the data generating process. Based on the ITID assumption, we derive a new generalization bound depending on both the task property and model invariance capability. To leverage the new generalization bound for practical guidance, we further provide operational algorithms to help improve model generalization performance by optimizing the training data distribution and enhancing the model invariance capability. The main contributions of this work can be summarized in four-fold:

\begin{itemize}
\item Data generating assumption: we introduce a new ITID assumption to the data generating process, which enjoys two advantages: (1) it is more realistic to fit most real-world tasks; (2) it well considers the task property and provides prescriptive guidance for generalization improvement.
\item Theoretical generalization bound: we derive task-related generalization bound based on the ITID assumption in both special and general cases. The new generalization bound well explains many phenomena conflicting with the existing generalization theories.
\item Algorithmic prescriptive guidance: we design a data augmentation algorithm with a theoretical guarantee to improve the generalization capability
by enhancing model invariance.
The algorithm serves as one inspiring example to leverage the proposed ITID-based theory to improve generalization performance in practical tasks.
\item Extensive experimental validation: we conduct experiments on both the toy data and the real-world Cifar-10 dataset to validate the derived task-based generalization bound as well as the effectiveness of the invariance enhancement algorithm in improving generalization performance.
\end{itemize}

In the remainder, in Section \ref{s2}, we provide the necessary notations and introduce related work to better position our work. In Section \ref{s3}, we formally introduce the ITID assumption and derive the task-related generalization bound under the ITID assumption. In Section \ref{s4}, we apply the new generalization theory to propose a practical data augmentation algorithm to improve generalization performance. In Section \ref{exp}, we report the experimental results on toy data and Cifar-10 dataset to validate the derived theories. Section \ref{s7} concludes this paper with discussions and future work.

\section{Preliminary and Related Work}
\label{s2}
PAC-learnability serves as the basis of most generalization theories. To better understand as well as position our work, this section first introduces some key notations and then reviews the related work in PAC-learnability with its many variants.

\subsection{Notations}
To guarantee a consistent understanding, we follow the typical notations used in traditional machine learning theories \cite{Shalev2014Understanding}. We denote $z$ as a data instance which has two parts: the input $x \in \mathcal{X}$ and the label $y\in \mathcal{Y}$, where $\mathcal{X}$ is the input space and $\mathcal{Y}$ is the label space.
Denoting $\mathcal{H}$ as the hypothesis set, we use $h\in\mathcal{H}:\mathcal{X}\mapsto \mathcal{Y}$ to indicate the specific hypothesis function that maps the input to the label space. With training set $S=\{z_1,z_2,...,z_n\}$ as a set of $n$ instances, the empirical risk for the hypothesis $h$ on the training set $S$ is denoted as $L_{S}(h):\mathcal{H} \mapsto \mathbb{R}_{+}$ and calculated as:
\begin{equation}
L_{S}(h)=1-\frac{1}{n}\sum_{i=1}^{n}\mathbb{I}(h(x_{i}),y_{i})
\end{equation}
where $\mathbb{I}$ is an indicator function outputing 1 if $h(x_{i})=y_{i}$, and 0 if otherwise.

\subsection{PAC Learnability Theory}
Generalization theories concern more on the expected prediction risk in practice instead of the empirical risk in the training set. PAC learning theory\cite{valiant1984theory} serves to model the expected risk based on the generating process of the dataset. Using $D_\mathcal{X}$ to denote a distribution over the input space $\mathcal{X}$, generating the training set $S$ involves first sampling the input $x$ from $D_\mathcal{X}$, and then obtaining the label $y$ from an unknown target function $f:\mathcal{X}\mapsto \mathcal{Y}$. Under this data generating process, the expected risk $L_{D_\mathcal{X},f}:\mathcal{H} \mapsto \mathbb{R}_{+}$ for given hypothesis $h$ is calculated as:
\begin{equation}
L_{D_\mathcal{X},f}(h)=\mathbb{P}_{x \sim D_\mathcal{X}}[h(x)\neq f(x)]=D_\mathcal{X}(\{x:h(x)\neq f(x)\})
\end{equation}

Based on the above expected risk, \emph{PAC learnability} is formally defined as follows \cite{valiant1984theory}:

\emph{
A hypothesis class $\mathcal{H}$ is PAC learnable if there exists a function $m_{\mathcal{H}}: (0,1)^{2} \mapsto \mathbb{N}$ and a learning algorithm with the following property:
$\forall \epsilon,\delta \in(0,1)$, if the number of training set $n \geq m_{\mathcal{H}}(\epsilon,\delta)$, the algorithm returns a hypothesis $h\in\mathcal{H} $ such that,
with probability of at least $1-\delta$,
}
\begin{equation}
L_{D_\mathcal{X},f}(h)\leq \epsilon
\end{equation}

PAC learnability enables the evaluation of whether a correct hypothesis can be learned from the training set and also derives an upper bound to the expected risk when it is PAC learnable. Notably, there are two assumptions behind the PAC learning theory: (1) realizability assumption, that there exists a perfect hypothesis $h^{*}\in \mathcal{H}$ satisfying $L_{D_\mathcal{X},f}(h^{*})=0$; and (2) IID assumption, that the data is independently drawn from a unique distribution. Since these assumptions are not naturally held in practice, many previous studies have been devoted to relevant modifications and discussions, which are reviewed in the next two subsections respectively.

\subsection{Agnostic PAC Learnable Theory}
The realizability assumption is hard to satisfy since the target function $f$ is agnostic in most cases. Agnostic PAC learning\cite{haussler2018decision} is proposed to address this problem by assuming a new data generating process that an instance $z=(x,y)$ is directly drawn from the distribution $D_{\mathcal{X,Y}}$ over the instance space $\mathcal{X}\times \mathcal{Y}$. In this context, the target function $f$ is no longer needed, and the expected risk can then be calculated as:
\begin{equation}
\begin{split}
L_{D_{\mathcal{X,Y}}}(h)&=\mathbb{P}_{(x,y) \sim D_{\mathcal{X,Y}}}[h(x)\neq y]\\
&= D_{\mathcal{X,Y}}(\{(x,y): h(x)\neq y\})
\end{split}
\end{equation}

\noindent Agnostic PAC learnability is formally defined as follows:

\emph{
A hypothesis class $\mathcal{H}$ is agnostic PAC learnable if there
exist a function $m_{\mathcal{H}}: (0,1)^{2} \mapsto \mathbb{N}$ and a learning algorithm with the following property:
$\forall \epsilon,\delta \in(0,1)$, if the number of training set $n \geq m_{\mathcal{H}}(\epsilon,\delta)$, the algorithm returns a hypothesis $h\in\mathcal{H} $ such that,
with probability of at least $1-\delta$,
}
\begin{equation}\label{tl}
L_{D_{\mathcal{X,Y}}}(h)\leq \min\limits_{h'\in \mathcal{H}} L_{D_{\mathcal{X,Y}}}(h')+\epsilon
\end{equation}

Agnostic PAC learnability relaxes the realizability assumption and theoretically bounds the expected risk discrepancy between the given hypothesis and the best hypothesis in the same class. However, since the real distribution $D_{\mathcal{X,Y}}$ is still unknown in most cases, modeling the discrepancy to the best hypothesis is not adequate to evaluate the actual generalization performance of the given hypothesis. Later generalization theories use its sufficient condition--\emph{uniform convergence}, to derive bound to the discrepancy between empirical and expected risk (i.e., generalization gap) so as to evaluate the practical generalization performance.

\noindent The uniform convergence states that:

\emph{
A hypothesis class $\mathcal{H}$ is agnostic PAC learnable if there
exist a function $m_{\mathcal{H}}: (0,1)^{2} \mapsto \mathbb{N}$ with the following property:
for $\forall \epsilon,\delta \in(0,1)$, for any hypothesis $h \in \mathcal{H}$, with probability of at least $1-\delta$,
 \begin{equation}\label{uni}
\left | L_{D}(h)-L_{S}(h) \right | \leq \epsilon
\end{equation}
}

Following uniform convergence, many previous generalization theories employ the IID assumption to instantiate the bound $\epsilon$ in the above equation, with terms of model complexity like VC dimension \cite{vapnik2015uniform} and Rademacher Complexities \cite{bartlett2002rademacher}. Our new generalization theory in this paper is also based on the classical uniform convergence and concerns the discrepancy between the empirical risk and the expected risk. However, without employing the IID assumption, we assume a new task-correlated data generating process and derive the bound of generalization gap involving with also the task property and data distribution.

\subsection{Non-IID Generalization Theories}
Non-IID generalization theories origin from the needs to address tasks that either violate the independently distributed assumption like recommender systems involving with time series samples, or violate the identical distributed assumption like active learning with distribution and target changing over time.

To solve these non-IID tasks, pioneering researchers have managed to relax the IID assumption mainly in two directions. Some researchers make efforts to model the dependence in sampling, and assume that the data are drawn from $\alpha / \beta / \phi$-mixing process \cite{yu1994rates}\cite{mohri2008stability}\cite{mohri2009rademacher}. Meanwhile, some other studies attempt to address the changes in distributions, by imposing additional assumptions for the changes \cite{bartlett1992learning} \cite{helmbold1994tracking} \cite{bartlett2000learning} \cite{mohri2012new}. There also exist studies combining the two directions, like \cite{ralaivola2010chromatic}.

Instead of solving specific tasks which obviously violating the IID assumption, this paper is motivated to address more general learning tasks which are conventionally treated as satisfying the IID assumption in generalization analysis. Taking the task of image classification as an example, we relax the identically distributed assumption and assume that image data is generated related to its belonging category. Since the focus of this paper is actually on modifying the identical distribution assumption, in the following, we review in more detail the related studies in the direction of independent but non-identically distributed generalization theories and discuss their relation to this paper.

The main idea of the past independent non-identically distributed  generalization theories is adding a new constraint, which is weaker than the IID assumption, to the data generating process. For example, \cite{bartlett1992learning}\cite{barve1997complexity}\cite{long1999complexity} constrain the L1-Norm between distributions, \cite{helmbold1994tracking} needs the number of distributions to be fixed, \cite{freund1997learning} assumes that the distribution changes follow simple structures, \cite{bartlett2000learning} considers a condition that distribution changes are arbitrary but happen rarely, \cite{mohri2012new} introduces a new measure of discrepancy from domain adaption to substitute the L1-Norm.

These work suffer from two main problems. Firstly, these studies are motivated to address specific tasks and the imposed constraints only work on the corresponding data generating process. This limits its application to general learning tasks. Secondly, these studies focus more on quantifying the discrepancy between the real and IID assumption, with derived descriptive conclusions that larger discrepancy leads to weaker generalization bounds. However, the distribution discrepancy influence generalization in a more complex way. For instance, the data augmentation methods like mix-up\cite{Zhang2018mixup} and random-erasing\cite{zhong2017random} generate instances which do not actually exist. Since the added unrealistic data inevitably changes the data distribution, those generalization theories simply constraining the distribution discrepancy fails to explain the observed improved generalization performance.

Our new generalization theory provides a solution to both problems: (1) The new data generating assumption is addressing more general tasks and expected to be satisfied in most cases. We evaluate the generalization performance based on model capability instead of on the deviated data distribution, making it fitting to different data generating scenarios. (2) By introducing task-correlated generative variables into the data generating process, we succeed to examine the correlation between generalization performance and task properties. The derived generalization bound is related to both the task complexity and model invariance capability. Furthermore, based on the task-related generalization bound, we provide prescriptive guidance to improve generalization performance in practical tasks.

\begin{figure*}[!t]
\centering
\includegraphics[width=5in]{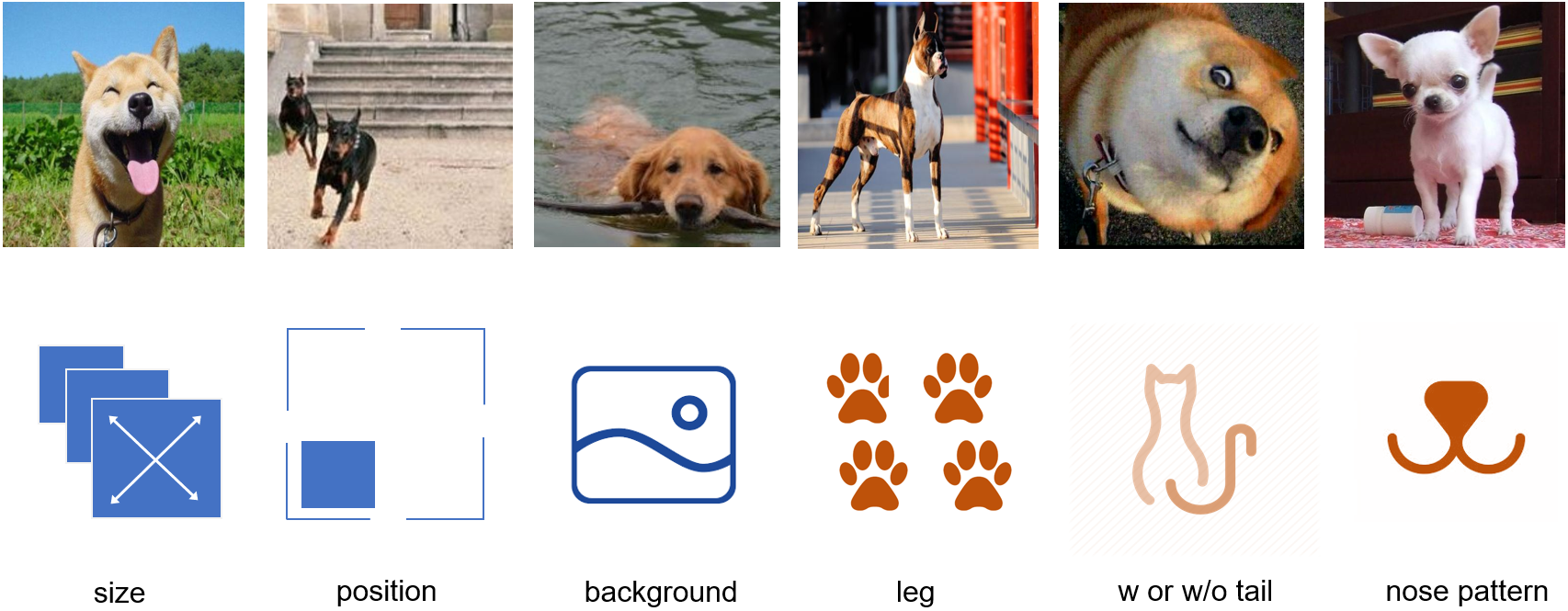}
\caption{Examples of (Top row): images belonging to the same category; and (Bottom row): generative variables controlling the image generating process.}
\label{fig1}
\end{figure*}

\begin{figure*}[!t]
\centering
\subfloat[]{\includegraphics[width=1.6in]{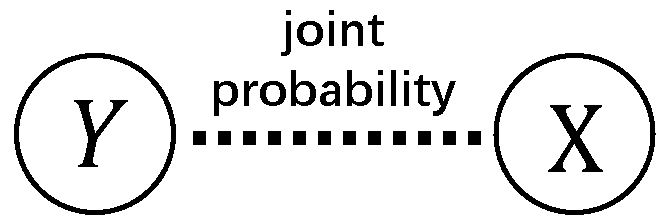}\label{fig2-a}}
\hfil
\subfloat[]{\includegraphics[width=2.2in]{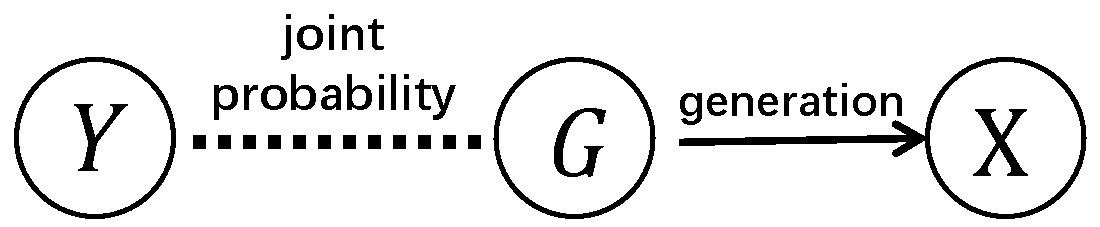}\label{fig2-b}}
\hfil
\subfloat[]{\includegraphics[width=2.2in]{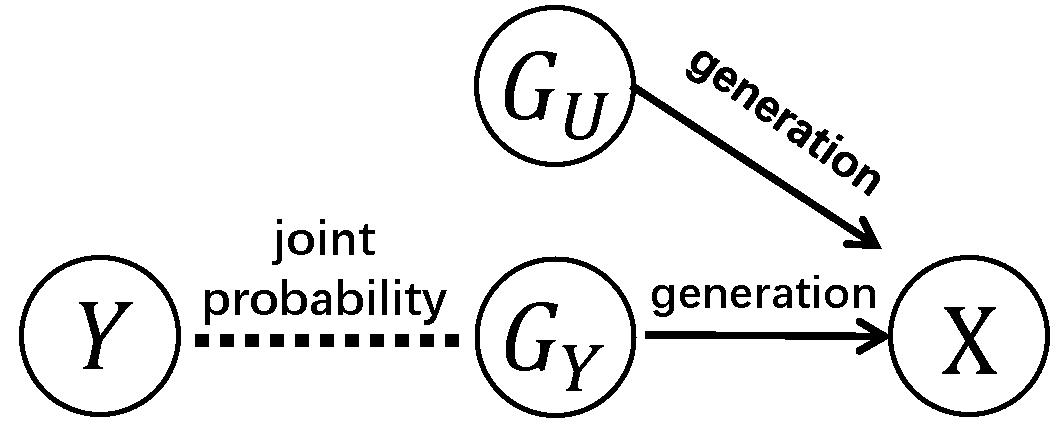}\label{fig2-c}}

\caption{Data generating process on: (a) original instance space  (b) generative variables; (c) task-correlated and -uncorrelated generative variables. }
\label{fig2}
\end{figure*}
\section{ITID Assumption and Task-related Generalization Bound }
\label{s3}
This section first introduces the ITID assumption to consider task property into the data generating process. Based on the ITID assumption, task-related generalization bound is then derived in both special and general cases.
\subsection{Generative Variable and ITID Assumption}
Before elaborating the ITID assumption, we first introduce an alternative way to examine the data generating process. We discuss that one reason why the conventional IID assumption is violated in most scenarios is that the assumption directly applies on the original instance space $\mathcal{X}\times \mathcal{Y}$. Taking an image classification task as example, the input images even belonging to the same category may vary significantly from each other (Fig.1-top illustrates some dog images with very different appearances). It is easy to see that the data patterns in the input instance space are too complex to constrain.

To address the complexity in instance space, we introduce an intermediate level to the data generating process which is controlled by \emph{Generative Variable}. We denote the generative variables as those unique factors leading to various data input patterns. As illustrated in Fig.1-bottom, the generative variables controlling the generating of the dog images include the dog size, position, background and legs, tail, nose patter, etc. These generative variables reduce the complexity in generating data inputs and thus facilitate the analysis of data distributions and generalization performance in a controllable level.

We now formalize the data generating process based on the above introduced generative variables. Suppose the target task involves with $m$ generative variables: $G=\{G^{(i)}\}_{i=1,2,...,m}$.  We further use $\textbf{g}=(g^{(1)},g^{(2)},...,g^{(m)})$ to denote a specific instantiation of random vectors $\textbf{G}=(G^{(1)},G^{(2)},...,G^{(m)})$, and $\mathcal{G}$ is a set which consists of all the possible values of $\textbf{G}$. The process of input generating from generative variables is encoded in the generating function $\phi:\mathcal{G} \mapsto \mathcal{X}$. $x_i=\phi(\textbf{g}_i)$ denotes the generation of input $x_i$ from certain generative variable values $g_i$. The data generating process based on the generative variables is defined as follows:
\begin{Def}\label{def4}
(Generative Variable-based Data Generating Process).
To generate each instance $z_i=(x_i,y_i)$, (1) sampling the generative variable and the label $d_i=(g_i,y_i)$ from exemplar distribution $D_i$ over $ \mathcal{G} \times \mathcal{Y}$\footnote{We will use exemplar to denote the sample from $\mathcal{G}\times \mathcal{Y}$ and instance to denote the sample from $\mathcal{X}\times \mathcal{Y}$ in the rest of this paper.}; (2) generating input from the generating function $x_i=\phi(\textbf{g}_i)$.
\end{Def}

The data generating process on the original instance space and generative variables is illustrated in Fig.2(a),(b) respectively.

We now consider the task properties into the generative variable-based data generating process. By examining the generative variables illustrated in Fig.1-bottom, we find that although all the generative variables contribute to the generation of input $X$, only part of them are closely correlated to the tasks. Coarsely representing the task as the target concept label $dog$, the left 3 generative variables in Fig.1-bottom may lead to different generations of $X$, but only have unstable correlations with the label $dog$. We call these generative variables as \emph{task-uncorrelated generative variables} $G_U$ if they do not necessarily change as the task (e.g., the target concept) changes. The right 3 generative variables in Fig.1-bottom, however, are bound to be different when the target concept $dog$ appears or not. We call these generative variables as \emph{task-correlated generative variables} $G_Y$ if they necessarily change as the task changes. The complete data generating process on task-correlated and task-uncorrelated
generative variables are illustrated in Fig. 2(c).

After partitioning the generative variables according to their correlation to tasks, we re-examine the problem of IID assumption. If we assume the generating function from $\mathcal{G}$ to $\mathcal{X}$ is a one-to-one mapping, the IID assumption over $\mathcal{X}\times\mathcal{Y}$ equivalently applies over $\mathcal{G}\times\mathcal{Y}$, which constrains that for all generated instances the sampled task-uncorrelated generative variables have the identical distribution. With \emph{background} as the example task-uncorrelated generative variable and dogs in the training image set are with a grass background, IID assumption actually imposes an unrealistic assumption that for all test cases the dogs also have a grass background.

The above discussion inspires our new assumption on the data generating process. Instead of constraining the original instance space or all the generative variables, we assume identical and independent distribution only on the task-correlated generative variables. This leads to the new ITID assumption which is formally defined as follows:

%

\begin{assumption}\label{def5}
(Independent and Task-Identically Distributed Assumption) The exemplars $\{d_1,d_2,...,d_n\}$ in the dataset $S$ are Independent and Task-Identically Distributed according to the distribution $D$. That is, every $d_i\in \mathcal{G}\times\mathcal{Y}$ in $S$ is sampled from a corresponding distribution $D_i$ independently~\footnote{For simplicity, we assume the generative variables are discrete.}, and there exist a subset of generative variables $G_Y\subseteq G$ that all $D_i$ share the identical marginal distribution $D$ over $\mathcal{G_Y}\times\mathcal{Y}$:
\begin{equation}
\sum_{G_Y^c}D_i(\boldsymbol{G},Y)=D(\boldsymbol{G_Y},Y), \forall i=1,2,...,n
\end{equation}
where $G_Y^c$ is the complementary set of $G_Y$ in $G$.
\end{assumption}

In ITID assumption, we call $G_Y$ task-correlated generative variables, and $G_U=G_Y^c$ task-uncorrelated generative variables.

 We can see that the ITID assumption is a reasonable relaxation to the IID assumption. By constraining the identical assumption only on the task-correlated generative variables, ITID allows the data to be sampled from different instance distributions. The differentiation of generative variables according to their correlation to the solved tasks enjoys two advantages: (1) The introduction of task-uncorrelated generative variables succeeds to maintain the diversity in the input space $\mathcal{X}$~\footnote{It is recognized in previous studies that task-uncorrelated inputs can influence the generalization performance\cite{littlestone1988learning}. However, the correlations to tasks were examined at the original instance level (e.g., image pixels for image classification task), which limits its applicability.}, e.g., the variables of \emph{brightness, color, background, etc.} can address the major variation in images that do not decisively affect the target concept label. (2) Identical distribution is necessary to evaluate the generalization performance on unseen data based on observed data. The introduction of task-correlated generative variables
  preserves this necessity on conditions of not violating the real data distribution as much as possible. It also enables the consideration of task properties into the data generating process and later generalization analysis.

\subsection{Generalization Bound based on the ITID Assumption}
Compared with the IID assumption, the ITID assumption describes the relationships between the generative variables and the tasks, and relaxes the constraints on task-uncorrelated generative variables. Under the ITID assumption, we can theoretically analyze how the task properties influence the generalization performance.

Since the ITID assumption is based on the generative variables, to analyze the generalization performance under the ITID assumption, we further introduce another definition regarding the hypothesis characteristic on generative variables, \emph{hypothesis determination}.

\begin{Def}\label{def6}
(Hypothesis determination on generative variables) Given a subset of generative variables $G_l \in G$, hypothesis $h\in \mathcal{H}$ is $G_l$-determining if and only if~\footnote{It is noted that the hypothesis determination can be equivalently defined by mutual information: $I(\hat{Y};\boldsymbol{G_l^c}|\boldsymbol{G_l})=0$, (where $G_l^c$ is the complementary set of $G_l$ in $G$: $G=\{G_l,G_l^c\})$: the predictions $\hat{Y}$ is determined by the generative variables $G$ indicating that $H(\hat{Y}|\boldsymbol{G_l},\boldsymbol{G_l^c})=0$, therefore $I(\hat{Y};\boldsymbol{G_l^c}|\boldsymbol{G_l})=H(\hat{Y}|\boldsymbol{G_l})-H(\hat{Y}|\boldsymbol{G_l},\boldsymbol{G_l^c})=H(\hat{Y}|\boldsymbol{G_l})$}:

\begin{equation}
\label{e9}
H(\hat{Y}|\boldsymbol{G_l})=0
\end{equation}
where $\hat{Y}$ is the prediction of hypothesis $h$, and $H(\hat{Y}|\boldsymbol{G_l})$ is the condition entropy. In addition, hypothesis space $\mathcal{H}$ is $G_l$-determining, if $\forall h\in\mathcal{H}$ is $G_l$-determining.
\end{Def}

According to the definition, $H(\hat{Y}|\boldsymbol{G_l})$ suggests that the prediction $\hat{Y}$ is uniquely and sufficiently determined given the generative variables $\boldsymbol{G_l}$. In other words, for any two input $x_i$, $x_j$, if their generative variables $G_l(x_i)=G_l(x_j)$, their hypothesis predictions are guaranteed to be the same:
$h(x_i)=h(x_j)$ with probability 1. The determination on $G_l$ also indicates that the hypothesis prediction is invariant to other generative variables, $G_l^c$. Therefore,
if a hypothesis $h$ is $G_l$-determining, we can alternatively call $h$ is $G_l^c$-invariance. In the following, we provide an example to help understand this definition and also instantiate generative variables in specific tasks.

\begin{eg}\label{eg1}
Suppose we have a specific task with input of three dimensions $\boldsymbol{X}=(X^{(0)},X^{(1)},X^{(2)})$. The generating function is identity mapping $\phi(g)=g$, and thus the generative variable set $\boldsymbol{G}=\boldsymbol{X}=(X^{(0)},X^{(1)},X^{(2)})$.
Given the hypothesis space is $\mathcal{H}=\{h|\hat{Y}=h(\boldsymbol{X})=a_0X^{(0)}+a_1X^{(1)}, a_0,a_1\in \mathbb{R}\}$, then the predictions of the hypotheses in $\mathcal{H}$ only depend on $\{X^{(0)},X^{(1)}\}$ and $H$ is thus $\{X^{(0)},X^{(1)}\}$-determining or $\{X^{(2)}\}$-invariance.
\end{eg}

According to the definition of hypothesis determination, it is desired that the hypothesis predictions only depend on the task-correlated generative variables. In the above example, if task-correlated generative variable set $G_Y=\{X^{(0)},X^{(1)}\}$, we call $\mathcal{H}$ is $G_Y$-determining or $G_U$-invariance. The predictions of the $G_Y$-determining models will not be influenced by the task-uncorrelated generative variables. As the marginal distributions over the task-correlated generative variables are identical according to the ITID assumption, the generalization bounds and the statistical inequalities which are based on the IID assumption still hold. In the following, we first derive the generalization bound for this special case when the hypothesis is $G_Y$-determining:
\begin{thm}\label{thm2}
Suppose that hypothesis space $\mathcal{H}$ is $G_{Y}$-determining, $\forall \epsilon,\delta \in(0,1)$, for any hypothesis $h \in \mathcal{H}$, with probability of at least $1-\delta$, we have
\begin{equation}
\left | L_{S}(h)-L_{D}(h) \right |\leq \sqrt{\frac{2(TKln2+ln(1/\delta))}{n}}
\end{equation}
where $n$ is the size of the training set $S$, $K$ is the number of labels in the label space $\mathcal{Y}$, and $T$ is the number of different variable configurations of $\boldsymbol{G_Y}$.
\end{thm}

The proof of Theorem \ref{thm2} is provided in Appendix-A.
This generalization bound is different from the traditional bounds as it shows that the generalization performance also depends on the task complexity $T$ and the number of label classes $K$. This provides an alternative way to examine generalization performance beyond the model complexity and training process. However, $G_Y$-determining is an ideal assumption, which is difficult to be satisfied in most cases. We are more interested to derive the new task-related generalization bound in general cases, so as to improve generalization performance in practical tasks. To this goal, we introduce the following definition to record the level of violating the $G_Y$-determining (i.e., dependence on $G_U$) for a specific hypothesis.

\begin{Def}\label{def7}
($\gamma$-$G_{U}$-dependence)
Hypothesis $h$ is $\gamma$-$G_{U}$-dependence($\gamma\geq0$), if~\footnote{Similar to $G_l$-determining, $\gamma$-$G_U$-dependence can also be defined using mutual information as:$I(\hat{Y};\boldsymbol{G_U}|\boldsymbol{G_Y})\leq \gamma$ }:
\begin{equation}
H(\hat{Y}|\boldsymbol{G_Y})\leq \gamma
\end{equation}
Hypothesis space $\mathcal{H}$ is $\gamma$-$G_{U}$-dependence, if $\forall h\in\mathcal{H}$ is $\gamma$-$G_{U}$-dependence.
\end{Def}

$\gamma$-$G_U$-dependence makes it possible to analyze more general cases when $G_Y$-determining is not satisfied. By varying the value of $\gamma$,  $\gamma$-$G_U$-dependence can theoretically cover arbitrary learning tasks. Therefore, we derive the following task-related generalization bound for general cases:

\begin{thm}\label{thm3}
Suppose hypothesis space $H$ is $\gamma$-$G_{U}$-dependence: if $n\geq m_{\mathcal{H}}(\frac{\epsilon}{2},\delta)$, with probability of at least $1-\delta$, $\forall h \in \mathcal{H}$ we have
\begin{equation}\label{12}
L_{D}(h)\leq \min\limits_{h'\in \mathcal{H}} L_{D}(h')+\epsilon+\frac{\gamma}{log2}
\end{equation}
where $m_{\mathcal{H}}$ is derived from any uniform convergency bound under $G_Y$-determining constraint: if $n\geq m_{\mathcal{H}}(\epsilon,\delta)$, with probability of at least $1-\delta$, we have
\begin{equation}
\left | L_{S}(h)-L_{D}(h) \right |\leq \epsilon
\end{equation}
\end{thm}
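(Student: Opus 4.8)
The plan is to reduce the general $\gamma$-$G_U$-dependent setting to the $G_Y$-determining setting already handled by Theorem~\ref{thm2}, by ``rounding'' every hypothesis to a $G_Y$-determining surrogate and paying for the rounding with the conditional-entropy budget $\gamma$. For each $h\in\mathcal{H}$ I would define $\tilde h$ to be the hypothesis that, given input $x$, outputs the most probable value of $h(X)$ conditioned on $\boldsymbol{G_Y}(X)=\boldsymbol{G_Y}(x)$, i.e.\ the conditional mode of the prediction given the task-correlated generative variables. Then $H(\widehat{Y}\mid\boldsymbol{G_Y})=0$ for $\tilde h$, so the surrogate class $\tilde{\mathcal{H}}=\{\tilde h:h\in\mathcal{H}\}$ is $G_Y$-determining and the postulated uniform-convergence bound with sample complexity $m_{\mathcal{H}}$ applies to $\tilde{\mathcal{H}}$ verbatim. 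The learning algorithm I would analyze is empirical risk minimization over $\tilde{\mathcal{H}}$, i.e.\ $\hat h=\arg\min_{\tilde h\in\tilde{\mathcal{H}}}L_S(\tilde h)$.

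The technical heart is an entropy-to-error estimate: any $h$ with $H(\widehat{Y}\mid\boldsymbol{G_Y})\le\gamma$ disagrees with its surrogate on a set of probability at most $\gamma/\log 2$. I would argue this conditionally on each configuration $\boldsymbol{g_Y}$: writing $p_{\boldsymbol{g_Y}}=\Pr[h(X)\neq\tilde h(X)\mid\boldsymbol{G_Y}=\boldsymbol{g_Y}]$, the conditional law of $\widehat{Y}$ places its largest atom, of mass $1-p_{\boldsymbol{g_Y}}\ge \tfrac12$, on $\tilde h$'s prediction, hence has entropy at least the binary entropy $H_b(p_{\boldsymbol{g_Y}})$, and $H_b(p)\ge(\log 2)\,p$ on $[0,\tfrac12]$ by concavity of $H_b$. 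Averaging over $\boldsymbol{g_Y}$ then gives $\gamma\ge H(\widehat{Y}\mid\boldsymbol{G_Y})\ge(\log 2)\,\Pr[h\neq\tilde h]$. Since two hypotheses' $0$-$1$ risks differ by at most their disagreement probability, this yields $|L_D(h)-L_D(\tilde h)|\le\gamma/\log 2$ for every $h\in\mathcal{H}$, and in particular for the population-risk minimizer $h^{*}=\arg\min_{h'\in\mathcal{H}}L_D(h')$.

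Assembling the pieces is then the standard two-step agnostic-PAC argument carried out inside $\tilde{\mathcal{H}}$. Fixing $n\ge m_{\mathcal{H}}(\epsilon/2,\delta)$ and conditioning on the event of probability at least $1-\delta$ on which $|L_S(\tilde h)-L_D(\tilde h)|\le\epsilon/2$ for all $\tilde h\in\tilde{\mathcal{H}}$, one chains $L_D(\hat h)\le L_S(\hat h)+\epsilon/2\le L_S(\widetilde{h^{*}})+\epsilon/2\le L_D(\widetilde{h^{*}})+\epsilon\le L_D(h^{*})+\epsilon+\gamma/\log 2$, where the first and third steps are uniform convergence over $\tilde{\mathcal{H}}$, the second is optimality of $\hat h$ for the empirical risk over $\tilde{\mathcal{H}}\ni\widetilde{h^{*}}$, and the last is the entropy estimate of the previous paragraph; since $L_D(h^{*})=\min_{h'\in\mathcal{H}}L_D(h')$ this is exactly the claimed bound.

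The step I expect to be the main obstacle is making the reduction legitimate under the \emph{non-identical} component of the ITID assumption: one must check that $L_D(\tilde h)$ and the conditional entropy $H(\widehat{Y}\mid\boldsymbol{G_Y})$ (hence the very notion of $\gamma$-$G_U$-dependence) are well posed when the exemplars come from distinct distributions $D_i$ sharing only the marginal $D$ over $\mathcal{G_Y}\times\mathcal{Y}$. This is precisely where $G_Y$-determinacy is indispensable, since it collapses the surrogate's risk to a single quantity governed by $D$ and lets the uniform-convergence guarantee of Theorem~\ref{thm2} transfer intact. A lesser, bookkeeping issue is that $\hat h$ lies in $\tilde{\mathcal{H}}$ rather than in $\mathcal{H}$; this is resolved either by treating $\tilde{\mathcal{H}}$ as the operative hypothesis class or by absorbing a second rounding term, still of order $\gamma/\log 2$, when mapping back into $\mathcal{H}$.
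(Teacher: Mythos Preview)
Your approach is essentially the paper's: both round each $h$ to the $G_Y$-determining surrogate $\tilde h(x)=\arg\max_{\hat y}\Pr[\hat Y=\hat y\mid \boldsymbol{G_Y}=G_Y(x)]$, bound the disagreement probability via the conditional-entropy budget, and then run the standard two-step ERM chain. The paper isolates the entropy-to-error step as a separate lemma (its Lemma~1, giving the sharper $1-\max_y P(y)\le H(Y)/(2\log 2)$) and pays that price on \emph{both} legs of the chain, whereas you use the weaker constant once; the arithmetic lands on the same $\gamma/\log 2$. Your remark about ERM over $\tilde{\mathcal H}$ versus $\mathcal H$ is exactly the bookkeeping difference between the two write-ups.

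One genuine but easily patched gap: your assertion that the largest atom satisfies $1-p_{\boldsymbol{g_Y}}\ge\tfrac12$ is false when $|\mathcal Y|>2$ (e.g.\ the uniform law on three labels has largest atom $\tfrac13$), so your chord bound $H_b(p)\ge(\log 2)\,p$ on $[0,\tfrac12]$ is not directly applicable. The desired inequality $p_{\boldsymbol{g_Y}}\le H(\hat Y\mid\boldsymbol{g_Y})/\log 2$ is nonetheless true; the paper handles the problematic regime by first disposing of the case $H\ge 2\log 2$ as trivial (the bound $1-H/(2\log 2)\le 0$ is vacuous there) before invoking concavity on the remaining range, and you should do likewise.
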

The proof of Theorem \ref{thm3} is provided in Appendix-B.
The three terms in the above generalization bound in Eqn.~\eqref{12} indicate respectively the expected risk for the best hypothesis, the generalization upper bound for $G_Y$-determining (which is related to the task complexity as stated in Theorem 1), and the influence of task-uncorrelated generative variables on hypothesis prediction. Give specific task, the generative variables and the task complexity are fixed, therefore the practical generalization performance is largely affected by the third term $\gamma$, i.e., how much the prediction is influenced by the task-uncorrelated generative variables.

 We are now ready to use the new generalization bound to explain the random label phenomenon\cite{zhang2016understanding} mentioned in the Introduction. After changing the instance labels, the marginal exemplar distribution $D_i(\boldsymbol{G_Y}, Y)$ of the training set is no longer identical to the testing set. It can also be viewed that the original task-correlated generative variables transfer to task-uncorrelated generative variables. The new task-uncorrelated generative variables increase the influence $\gamma$ to prediction, which leads to the decreased generalization performance. We can also understand this theorem by recalling the continuous efforts in obtaining invariance in computer vision studies. Viewing position, size, angle as the task-uncorrelated generative variables, the shared goal is to obtain the translation, scale and rotation invariance in both hand-crafted and deep neural features\cite{Bengio2013Representation}. We can see that the new generalization bound provides a theoretical explanation to the improved generalization performance from feature invariance.

\section{Application: Enhancing the Invariance over Task-uncorrelated Generative Variables}
\label{s4}
 According to the above ITID-based generalization theory, models will have lower generalization bound by increasing its invariance over task-uncorrelated generative variables. This section aims to apply the proposed generalization theory to provide prescriptive guidance to improve generalization performance. Specifically, we derive three additional theorems to respectively address the feasibility of manipulating single generative variables, the measure to evaluate the influence of generative variables, and the way to obtain invariance via balancing training data distribution. We then employ these theorems to explain the effectiveness of existing data augmentation solutions.

\subsection{Addition Rule for Model Influence  }
In Theorem \ref{thm3}, $\gamma$ represents the upper bound of $I(\hat{Y};\boldsymbol{G_U}|\boldsymbol{G_Y})$, which is the influence of all the task-uncorrelated generative variables on model predictions. However, as the task-uncorrelated generative variables $G_U$ are hard to enumerate, it is difficult to decrease $\gamma$ directly on the whole set. Therefore, we propose the following theorem that there exist an upper bound of model influence follows addition rule, which ensures the feasibility of reducing model influence by manipulating single generative variable.

\begin{thm}\label{thm4}
Suppose that $\hat{Y}$ is the predictions of hypothesis $h$, and we have $N$ task-uncorrelated generative variables: $G_U=\{G_U^{(i)}\}_{i=1..N}$. If
\begin{equation}
\gamma=\sum_{i=1}^{N}{I(\hat{Y};G_U^{(i)}|\boldsymbol{G_Y})}
\end{equation}
then $h$ is $\gamma$-$G_{U}$-dependence. Where $I$ represents the mutual information.
\end{thm}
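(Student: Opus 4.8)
The plan is to reduce the statement to a purely information-theoretic inequality and then close it using the structure of the generative process. Since a hypothesis is a deterministic map and the input is $x=\phi(\boldsymbol{G})$, the prediction $\hat{Y}=h(\phi(\boldsymbol{G}))$ is a function of the whole generative vector $\boldsymbol{G}=(\boldsymbol{G_Y},\boldsymbol{G_U})$, so $H(\hat{Y}|\boldsymbol{G_Y},\boldsymbol{G_U})=0$ and hence $H(\hat{Y}|\boldsymbol{G_Y})=I(\hat{Y};\boldsymbol{G_U}|\boldsymbol{G_Y})$, exactly the mutual-information form noted in the footnote of Definition~\ref{def7}. By that definition and the choice $\gamma=\sum_{i=1}^{N}I(\hat{Y};G_U^{(i)}|\boldsymbol{G_Y})$, it therefore suffices to establish
\begin{equation}
I(\hat{Y};\boldsymbol{G_U}|\boldsymbol{G_Y})\ \le\ \sum_{i=1}^{N} I(\hat{Y};G_U^{(i)}|\boldsymbol{G_Y}),
\end{equation}
i.e.\ that the joint influence of the task-uncorrelated variables is dominated by the sum of the single-variable influences.

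For the inequality itself I would expand both sides in entropies: $I(\hat{Y};\boldsymbol{G_U}|\boldsymbol{G_Y})=H(\boldsymbol{G_U}|\boldsymbol{G_Y})-H(\boldsymbol{G_U}|\hat{Y},\boldsymbol{G_Y})$ and, for each $i$, $I(\hat{Y};G_U^{(i)}|\boldsymbol{G_Y})=H(G_U^{(i)}|\boldsymbol{G_Y})-H(G_U^{(i)}|\hat{Y},\boldsymbol{G_Y})$. Summing the latter and subtracting, the claim becomes
\begin{equation}
H(\boldsymbol{G_U}|\boldsymbol{G_Y})-\sum_{i=1}^{N}H(G_U^{(i)}|\boldsymbol{G_Y})\ \le\ H(\boldsymbol{G_U}|\hat{Y},\boldsymbol{G_Y})-\sum_{i=1}^{N}H(G_U^{(i)}|\hat{Y},\boldsymbol{G_Y}).
\end{equation}
The left side is $\le 0$ by sub-additivity of conditional entropy, so it is enough to show the right side is $\ge 0$, that is $H(\boldsymbol{G_U}|\hat{Y},\boldsymbol{G_Y})\ge\sum_i H(G_U^{(i)}|\hat{Y},\boldsymbol{G_Y})$. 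An equivalent packaging uses the chain rule $I(\hat{Y};\boldsymbol{G_U}|\boldsymbol{G_Y})=\sum_i I(\hat{Y};G_U^{(i)}|\boldsymbol{G_Y},G_U^{(1)},\dots,G_U^{(i-1)})$ and bounds each summand by $I(\hat{Y};G_U^{(i)}|\boldsymbol{G_Y})$; the two routes require the same ingredient.

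The hard part is precisely that ingredient: conditional mutual information is \emph{not} sub-additive in general — e.g.\ if $\hat{Y}=G_U^{(1)}\oplus G_U^{(2)}$ for two independent bits and trivial $\boldsymbol{G_Y}$, then $H(\hat{Y}|\boldsymbol{G_Y})=1$ while $\sum_i I(\hat{Y};G_U^{(i)}|\boldsymbol{G_Y})=0$ — so the inequality can hold only by exploiting structure of the data-generating process. I would invoke the (conditional) independence of the task-uncorrelated variables $G_U^{(1)},\dots,G_U^{(N)}$ given $(\boldsymbol{G_Y},\hat{Y})$, of the kind suggested by the generative diagram in Fig.~\ref{fig2}(c), which turns $H(\boldsymbol{G_U}|\hat{Y},\boldsymbol{G_Y})=\sum_i H(G_U^{(i)}|\hat{Y},\boldsymbol{G_Y})$ and finishes the proof; informally, this is the statement that the generative variables are generated independently and contribute no \emph{synergistic} information about the prediction, ruling out XOR-type interactions. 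I expect the author's write-up to present the final step as a short chain-rule computation, so the substance of the argument is making this independence hypothesis explicit and justifying it from the ITID generative model.
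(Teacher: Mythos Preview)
Your reduction is exactly the paper's: it writes
\[
\gamma=\sum_{i=1}^{N}I(\hat{Y};G_U^{(i)}|\boldsymbol{G_Y})\ \ge\ I(\hat{Y};\boldsymbol{G_U}|\boldsymbol{G_Y})=H(\hat{Y}|\boldsymbol{G_Y})-H(\hat{Y}|\boldsymbol{G_Y},\boldsymbol{G_U}),
\]
observes $H(\hat{Y}|\boldsymbol{G_Y},\boldsymbol{G_U})=0$ since $(\boldsymbol{G_Y},\boldsymbol{G_U})$ determines $\hat{Y}$, and concludes $H(\hat{Y}|\boldsymbol{G_Y})\le\gamma$. That is the entire argument in the paper; the inequality $\sum_i I(\hat{Y};G_U^{(i)}|\boldsymbol{G_Y})\ge I(\hat{Y};\boldsymbol{G_U}|\boldsymbol{G_Y})$ is asserted in one line with no further justification.

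Your worry is therefore well placed, and in fact sharper than the paper's own treatment. Your XOR counterexample is valid and shows the inequality is \emph{not} a general information-theoretic fact; the paper does not supply the structural hypothesis needed to rule out such synergy. Your proposed remedy---conditional independence of the $G_U^{(i)}$ given $(\boldsymbol{G_Y},\hat{Y})$---would indeed close the gap, but note that this particular assumption is nowhere stated in the paper and is somewhat unnatural: since $\hat{Y}$ is a deterministic function of all the $G_U^{(i)}$, conditioning on it typically \emph{creates} dependence among them (explaining-away), not removes it. A more defensible reading is that the paper is implicitly restricting to hypotheses (or generative processes) with no synergistic interaction among the task-uncorrelated variables, but this is left tacit. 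In short: your approach coincides with the paper's, and the ``hard part'' you flag is a genuine lacuna that the paper's proof simply does not address.
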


The proof of Theorem \ref{thm4} can be seen in Appendix-C. It divides the total influence of all the task-uncorrelated generative variables into the sum of the influences of each task-uncorrelated generative variable. Therefore, if we have some prior knowledge for a part of the task-uncorrelated generative variables, we can decrease $\gamma$ by attenuating the influence of these known task-uncorrelated generative variables.

\subsection{Choosing Influential Generative Variables}
 As the model capacities are limited, people can not choose to obtain the invariance of all the variables. The problem thus turns to choose the most influential task-uncorrelated generative variables.

Theorem \ref{thm4} provides off-the-shelf measure $I(\hat{Y};G_U^{(i)}|\boldsymbol{G_Y})$  to evaluate the model influence of each task-uncorrelated generative variable $G_U^{(i)}$. However, its value is hard to calculate: (1) the task-correlated generative variables $G_Y$ is difficult to enumerate; (2) $\hat{Y}$ represents the predictions of the final models, which is unknown before training when evaluating the influence of $G_U^{(i)}$.

We address the first problem by using $I(\hat{Y};G_U^{(i)})$ to estimate the value of $I(\hat{Y};G_U^{(i)}|\boldsymbol{G_Y})$. Note that we are not interested in the absolute value of this conditional mutual information for each $G_U^{(i)}$. Calculating the relative value is sufficient to determine which task-uncorrelated generative variable is more influential.  Since $G_Y$ is identical for all the task-uncorrelated generative variables, removing it will not affect the evaluation of relative influence.

For solving the second problem, the key is how to effectively predict the properties of the final model before training it. We consider exploring the relationship between the empirical distribution and the optimal predictions.
To quantify this relationship, we first expand the output of the hypothesis to real numbers \cite{haussler2018decision}: from $\mathcal{Y}$ to $[0,1]^K$, where $K$ is the number of the labels. Then we introduce a new definition of the hypothesis $h:\mathcal{X}\mapsto [0,1]^K$. For clarity, we denote $\boldsymbol{Q}=(Q_0(x),Q_1(x),...,Q_K(x))=h(x)$ to represent the output scores when $x$ is the input. In general, if $x$ is given, the sum of the scores of all possible labels will be 1: $\sum_{y\in\mathcal{Y}}{Q_y(x)}=1$.
After modifying the definition of the hypothesis, the definition of $G_l$-determining also changes accordingly as follows.

\begin{Def}\label{def9}
(strict-$G_l$-determining or strict-$G_l^{c}$-invariance)
Suppose that $G$ is the whole generative variables of the task,
$G_l$ is a subset of $G$, and $G_l^{c}$ is the complementary set of $G_l$ in $G$.
The hypothesis $h$ is called strict-$G_l$-determining or strict-$G_l^{c}$-invariance, means that
\begin{equation}
I(\boldsymbol{Q};\boldsymbol{G_l^c}|\boldsymbol{G_l})=0
\end{equation}
where $I$ is the mutual information.

As $\boldsymbol{Q}$ is a function of $\boldsymbol{G}=(\boldsymbol{G_l},\boldsymbol{G_l^c})$, $I(\boldsymbol{Q};\boldsymbol{G_l^c}|\boldsymbol{G_l})=0$ means that $\boldsymbol{Q}$ can be represented by a function of $\boldsymbol{G_l}$ with probability 1:
\begin{equation}
\boldsymbol{Q}=h(x)=\psi_h(\boldsymbol{G_l}=G_l(x))
\end{equation}
\end{Def}

Under the constraint of strict-$G_l$-determining, the output values of the hypothesis can be calculated as a function over $\boldsymbol{G_l}$. In this way, we can use the empirical distribution of the training set over $(\boldsymbol{G_l},Y)$ to calculate the cross-entropy loss. The following Theorem 4 is derived to show that if the optimal models obtain minimum cross-entropy loss, their output values will be determined by the empirical distribution over $(\boldsymbol{G_l},Y)$.

\begin{thm}\label{thm5}
Suppose that the hypothesis class $\mathcal{H}$ is strict-$G_l$-determining, and the optimal hypothesis $h'\in\mathcal{H}$ is obtained by cross-entropy loss with empirical risk minimization. The output values $(Q_0^{'},Q_1^{'},...,Q_K^{'})$ of the optimal hypothesis $h'(x)$ under such settings will be
\begin{equation}
Q_y^{'}(x)=\widetilde{P}_S(Y=y|\boldsymbol{G_l}=G_l(x))
\end{equation}
where $\widetilde{P}_S$ represents the empirical distribution of training set $S$.
\end{thm}

The proof of Theorem \ref{thm5} can be seen in Appendix-D.
According to Theorem \ref{thm5}, the output values of the optimal models can be estimated by the empirical distributions of training set: we can use $I_S(Y;G_U^{(i)})$ to estimate $I(\hat{Y};G_U^{(i)})$, where $I_S$ means the empirical mutual information. Additionally, $I_S(Y;G_U^{(i)})$ can be further calculated as
\begin{equation}
I_S(Y;G_U^{(i)})=H_S(Y)-H_S(Y|G_U^{(i)})
\end{equation}
where $H_S$ means the empirical entropy. As $H_S(Y)$ is fixed in the training set, we finally transfer the evaluation of model influence to calculating the conditional entropy $H_S(Y|G_U^{(i)})$ from the empirical training distribution: the smaller the $H_S(Y|G_U^{(i)})$, the higher the influence that $G_U^{(i)}$ has on the model predictions. That is to say, a task-uncorrelated generative variable with heavy distribution bias on labels $Y$, will have a large impact on the model generalization performance. The above theorem is also consistent with the observations in agnostic data selection bias and causality feature learning\cite{shen2018causally}. Still using the task-uncorrelated background in dog image classification task as example, if the grass background appear with a dog in all training images, we will have $H_S(Y|G_U^{(i)}=grass)=0$. According to Theorem 4, it indicates that the grass background will seriously influence the predictions of the final models, and may deteriorate the model generalization if background grass in the testing images appear without dogs.

\subsection{Obtaining Invariance by Balancing Training Set}
After choosing the influential task-uncorrelated generative variables, the remaining problem is how to obtain invariance over these generative variables. There are two major ways to achieve this goal. One is designing models obtaining more invariance on task-uncorrelated generative variables. For example, CNNs exploit the convolutional and pooling layers to obtain translation invariance to some extent \cite{Y1997Handwritten}\cite{Bengio2013Representation}. The other is modifying the data distributions, by explicitly importing new data via data augmentations\cite{krizhevsky2012imagenet} or implicitly constraining the loss functions\cite{bartlett1998sample}.  This subsection fails into the second way and introduces the following theorem to reduce the influence of task-uncorrelated generative variables by explicitly balancing training data distributions. We will leave the attempts of model design and other distribution modification solutions in future studies.
\begin{thm}\label{thm6}
Suppose that the hypothesis class $\mathcal{H}$ is strict-$G_l$-determining, and obtain the optimal hypothesis $h'\in\mathcal{H}$ is obtained by cross-entropy loss with empirical risk minimization. Given a generative variable subset $G_t\subset G_l$, we have the sufficient condition for that the optimal $h'$ is strict-$G_t$-invariance on the training set $S$:

 $G_t$ is statistically independent with $G_t^c$ and $Y$ on the training set $S$.

\noindent where $G_t^c$ is the complementary set of $G_t$ in $G_l$.
\end{thm}

The proof of Theorem \ref{thm6} can be seen in Appendix-E.
Note that researchers usually use the label with the largest output value as the prediction of the model: $\arg\max\limits_{y\in\mathcal{Y}}{Q_y}$, we provide the following corollary for such predictions:
\begin{cor}\label{cor1}
Suppose that $Q_y$ is the output value of hypothesis $h \in \mathcal{H}$ for label $Y=y$, the sufficient condition of that $\arg\max\limits_{y\in\mathcal{Y}}{Q_y}$ is $G_t$-invariance is:

$h$ is strict-$G_t$-invariance
\end{cor}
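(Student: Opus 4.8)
\textbf{Proof proposal for Corollary~\ref{cor1}.}
The plan is to unwind the two relevant definitions and observe that the predicted label is merely a deterministic post-processing of the score vector $\boldsymbol{Q}$, so any invariance of $\boldsymbol{Q}$ is automatically inherited by the hard prediction. First I would recall from Definition~\ref{def9} that ``$h$ is strict-$G_t$-invariance'' is the same as ``$h$ is strict-$G_t^c$-determining'', i.e.\ $I(\boldsymbol{Q};\boldsymbol{G_t}\mid\boldsymbol{G_t^c})=0$; and since $\boldsymbol{Q}=h(x)$ is itself a function of $\boldsymbol{G}=(\boldsymbol{G_t},\boldsymbol{G_t^c})$, this conditional-mutual-information identity is equivalent (exactly as in the second display of Definition~\ref{def9}) to the existence of a map $\psi_h$ with $\boldsymbol{Q}=\psi_h(\boldsymbol{G_t^c})$ holding with probability $1$. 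Thus the entire vector of output scores is, almost surely, a function of $\boldsymbol{G_t^c}$ alone.

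Next I would fix a deterministic tie-breaking rule for the $\arg\max$ (for instance, return the smallest index attaining the maximum), so that $\hat{Y}=\arg\max_{y\in\mathcal{Y}}Q_y$ is a well-defined deterministic function $\pi$ of the vector $\boldsymbol{Q}$, i.e.\ $\hat{Y}=\pi(\boldsymbol{Q})$. Composing with the previous step gives $\hat{Y}=\pi(\psi_h(\boldsymbol{G_t^c}))$ with probability $1$, so $\hat{Y}$ is (almost surely) a deterministic function of $\boldsymbol{G_t^c}$. Hence $H(\hat{Y}\mid\boldsymbol{G_t^c})=0$, which by Definition~\ref{def6} is precisely the assertion that $\arg\max_{y\in\mathcal{Y}}Q_y$ is $G_t$-invariance. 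Combined with Theorem~\ref{thm6}, which furnishes a checkable condition for $h$ to be strict-$G_t$-invariance, this gives an end-to-end criterion for invariance of the discrete label produced by the model.

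The only genuine obstacle is the $\arg\max$ ambiguity: on the event where two coordinates of $\boldsymbol{Q}$ tie for the maximum the bare $\arg\max$ is not single-valued, and if the tie were resolved by randomization the prediction would in general no longer be $\boldsymbol{G_t^c}$-measurable. I would dispose of this by requiring the tie-breaking to be any fixed measurable function of $\boldsymbol{Q}$; the argument above then goes through verbatim. Alternatively, without assuming such a convention, one notes that the tie event is itself $\boldsymbol{G_t^c}$-measurable (being a function of $\boldsymbol{Q}=\psi_h(\boldsymbol{G_t^c})$), so conditioning on any fixed value of $\boldsymbol{G_t^c}$ either forces a tie or forces a unique maximizer; in the latter case $\hat{Y}$ is already determined, and the former case is absorbed into the deterministic tie-breaking rule, leaving $H(\hat{Y}\mid\boldsymbol{G_t^c})=0$ intact.
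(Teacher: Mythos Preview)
Your argument is correct and is exactly the natural one-liner the paper has in mind: strict-$G_t$-invariance makes $\boldsymbol{Q}$ a (measurable) function of $\boldsymbol{G_t^c}$ alone, and $\hat{Y}=\arg\max_y Q_y$ is a deterministic function of $\boldsymbol{Q}$, hence of $\boldsymbol{G_t^c}$, giving $H(\hat{Y}\mid\boldsymbol{G_t^c})=0$, i.e.\ $G_t$-invariance in the sense of Definition~\ref{def6}. The paper in fact does not spell out a proof of this corollary at all---it is stated immediately after Theorem~\ref{thm6} and treated as self-evident---so there is nothing further to compare; your write-up, including the care about tie-breaking, is more detailed than what the paper provides.
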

Therefore, assume $G_t$ contains many influential task-uncorrelated generative variables, we can obtain invariance over $G_t$ by balancing the training data distributions and make those generative variables statistically independent with all the other generative variables and the labels.

Combining Theorem 3,4,5, leads to an integrated solution summarized as Algorithm \ref{alg} (abbreviated as \emph{InvarTG}). \emph{InvarTG} provides a practical procedure to lower generalization bound by obtaining invariance over the task-uncorrelated generative variables.
\begin{algorithm}[htb]
\caption{ Obtaining Invariance over Task-uncorrelated Generative variables (InvarTG) }
\label{alg}
\begin{algorithmic} [1]
\REQUIRE ~~\\ 
- the raw training set, $S$;\\
- the set of task-uncorrelated generative variables, $G_U=\{G_U^{(i)}\}_{i=1...N}$;\\
- the target threshold for imbalanced generative variables, $thrd$;\\
- the training algorithm, $Train$;\\
- a method to manipulate the value of $G_U$ in training samples to get an augmented new set, $Balance$;\\

\ENSURE ~~\\ 
- a model with lower generalization bound, $h^*$;

$\ $
\REPEAT
\STATE Choosing a target task-uncorrelated generative variable which have the minimum $H_{S}(Y|G_U^{(i)})$, $U_T\leftarrow\arg\min\limits_{G_U^{(i)}}{H_{S}(Y|G_U^{(i)})}$;
\label{a2}
\STATE  Using a method to modify $U_T$ in training set $S$, so that $U_T$ will be statistically independent with other variables, $S\leftarrow Balance(S,U_T)$;
\label{a3}
\STATE  Deleting $U_T$ from $G_U$, $G_U\leftarrow \{U| U \in G_U \bigwedge U\neq U_T \}$;
\UNTIL{$H_{S}(Y|G_U^{(i)})> thrd$}
\STATE  Training with the new training set and get the model, $h^*\leftarrow Train(S')$;
\label{a4}
\RETURN $h^*$; 
\end{algorithmic}
\end{algorithm}

\subsection{Explanation on the Data Augmentation}
Data augmentation is devoted to optimizing data distribution and has achieved improved generalization performance in many scenarios \cite{Zhang2018mixup}\cite{zhong2017random}. We can find that data augmentation solutions share many similarities with the above introduced theorems to obtain invariance. Actually, data augmentation can serve as one typical way to instantiate the \emph{Balance} method in \emph{InvarTG} algorithm. In this subsection, we will use data augmentation as an example to help better understand the proposed theorems as well as provide a theoretical explanation to the effectiveness of data augmentation.

The effectiveness of data augmentation in improving generalization performance is recognized not only due to the increased training data size: with the same size of training data, different data augmentation solutions will obtain different generalization performances\cite{Ekin2018Autoaugment}. This critically contradicts with the traditional generalization theories whose generalization bound only concerns about the number of training data. Moreover,  many data augmentation solutions like color jitter, random-erasing, mix-up add new samples rarely appearing in real scenarios, which obviously violates the IID assumption. This further invalids the traditional generalization theories in explaining the effectiveness of data augmentation.

On the contrary, while most data augmentation solutions inevitably change the training data distributions, they maintain the same marginal distribution over the task-correlated generative variables. Taking the dog image classification task as an example, color jitter changes the object colors (i.e., task-uncorrelated generative variable), but the intrinsic features of dogs such as the appearance of legs, w. or w/o. tail (i.e., task-correlated generative variable) remain unchanged. The compatibility of the ITID assumption makes it possible to explain the effectiveness of data augmentation by employing the proposed generalization theory.

According to the proposed task-related generalization theory, in addition to increasing training data size, data augmentation also contributes to the improved generalization by balancing the data distribution regarding generative variables. For example, traditional data augmentation solutions of random cropping, random rotation, and color jitter respectively balance the data distribution regarding existing regular task-uncorrelated generative variables of position, angle, and color. In this way, the influence of those task-uncorrelated generative variables on model prediction is reduced so as to lower the overall generalization bound and improve the generalization performance.

The proposed theory is also helpful in explaining many new advanced data augmentation solutions. These new augmentation solutions, such as random erasing\cite{zhong2017random}, mix-up\cite{Zhang2018mixup}, RICAP\cite{takahashi2018ricap}, usually generate very different samples from the original training set, yet obtain better generalization performance than traditional augmentation solutions. According to the proposed generalization theory, these solutions actually introduce new task-uncorrelated generative variables to help better balance the existing ones. For example, occlusion is one regular task-uncorrelated generative variables existing in many object classification tasks, which is difficult to be balanced via traditional data augmentation. To reduce its influence on model prediction, random erasing introduces additional rectangle noise to randomly erase a part of the input image, which decreases the dependence between occlusion and label distributions. The question is, since rectangle noise rarely exists in original data and can be viewed as a new task-uncorrelated generative variable,  will it increase the total influence $\gamma$? Note that when adding the new of task-uncorrelated generative variables, their parameters (like the area and aspect ratio in random erasing~\footnote{\small{More evidence and discussions on this are provided in the experimental validation - Section 5.2.}}) are sampled independently with the inputs and the labels. According to Theorem 5, optimal models are guaranteed to learn invariance to these independently distributed task-uncorrelated generative variables. This explains the reason that these data augmentation solutions add very different training samples but lead to better generalization performance.

\section{Experimental Validation}
\label{exp}
The application of the proposed generalization theory involves with three key conclusions: (1) Influence measure. Theorem \ref{thm5} proves that the conditional entropy in the training dataset $H_S(Y|G_U^{(i)})$ can be used to measure the influence of $G_U^{(i)}$ on model predictions. (2) Invariance acquisition.  Theorem \ref{thm6} proves to obtain model invariance over generative variables by balancing the distribution of the training set. (3) Generalization improvement. Theorem \ref{thm3} and Theorem \ref{thm4} together prove that model will have a lower generalization bound by enhancing invariance over task-uncorrelated generative variables.  In this section, we design experiments on both toy data and a real-world image dataset to validate these conclusions as well as demonstrate the effectiveness of the proposed theory in improving generalization performance.

\subsection{Validation on Toy Data}
\label{s5}
\subsubsection{Experimental Settings}
\label{tes}
We design a toy task of binary classification (label $c=\{0,1\}$). Each class $c$ consists of 5,000 instances $\mathbf{x}_c^{(i)}\in \mathbb{R}^{20}$, with half as training data, half as testing data. The goal is to learn a hypothesis $h\in H: R^{20}\rightarrow \{0,1\}$. We first construct a dataset following the IID assumption: training and testing data of class $c$ are sampled from the identical multivariate Gaussian distribution  $\mathbf{x}_c^{(i)} \sim \mathcal{N}(\mathbf{\mu}_c,\mathbf{\sum}_c); i=1,\cdots,5,000$. $\mathbf{\mu}_c\in \mathbb{R}^{20},\mathbf{\sum}_c\in \mathbb{R}^{20\times 20}$ denote the fixed mean vector and semi-positive covariance matrix for class $c$ , respectively.

To validate the proposed generalization theory on ITID assumption, we need first to instantiate generative variables. Same as Example 1, we assume identity mapping as the generating function: $\phi(g)=g$, i.e., the input features have one-to-one mapping with the generative variables. To distinguish between task-correlated and -uncorrelated generative variables, we restrict the training and testing data to sample from identical input distributions corresponding to task-correlated generative variables, and from different input distributions corresponding to task-uncorrelated ones. Specifically, among the 20-d input features, the first 10-d are assumed as task-correlated, and the last 10-d as task-uncorrelated. To realize this, for each class $c$, the training data are sampled following the same process as under the IID assumption: $\mathbf{x}_c^{(i)} \sim \mathcal{N}(\mathbf{\mu}_c,\mathbf{\sum}_c); i=1,\cdots,2,500$. When generating the testing data, for each testing instance $\mathbf{x}_c^{(i)}, i=2,501, \cdots, 5,000$, we first construct a new Gaussian distribution $\mathcal{N}(\mathbf{\mu}_c^{(i)},\mathbf{\sum}_c^{(i)})$ by substituting the last 10-d of  $\mathbf{\mu}_c$ with random vector to generate a new mean vector $\mathbf{\mu}_c^{(i)}$, and substituting the upper-right, bottom-left, bottom-right $10\times 10$ submatrices of  $\mathbf{\sum}_c$ to generate a new covariance matrix $\mathbf{\sum}_c^{(i)}$. The testing instance is then sampled from the new distribution: $\mathbf{x}_c^{(i)} \sim \mathcal{N}(\mathbf{\mu}_c^{(i)},\mathbf{\sum}_c^{(i)}); i=2,501,\cdots,5,000$. This process is illustrated in Fig.\ref{fig3}. In this way,  we construct a toy dataset satisfying the proposed ITID assumption: the testing instances are sampled from different distributions but maintain the same marginal distribution over the task-correlated generative variables.

Other experimental settings are as follows. Regarding the hypothesis, the models to be learned in toy experiments are fixed as generalized linear models with sigmoid function. Regarding the training process, the optimization target is empirical risk minimization with binary cross-entropy loss, and the training algorithm is stochastic gradient descent with $0.01$ learning rate and $0.9$ momentum. Besides, the batch size is 256, and all the models are trained with 100 epochs. To get a stable result, we randomly generate 100 different toy datasets with the above generating process. The results reported in the following validation experiments are averaged over the 100 toy datasets..

\begin{figure}[!t]
\centering
\includegraphics[width=3in]{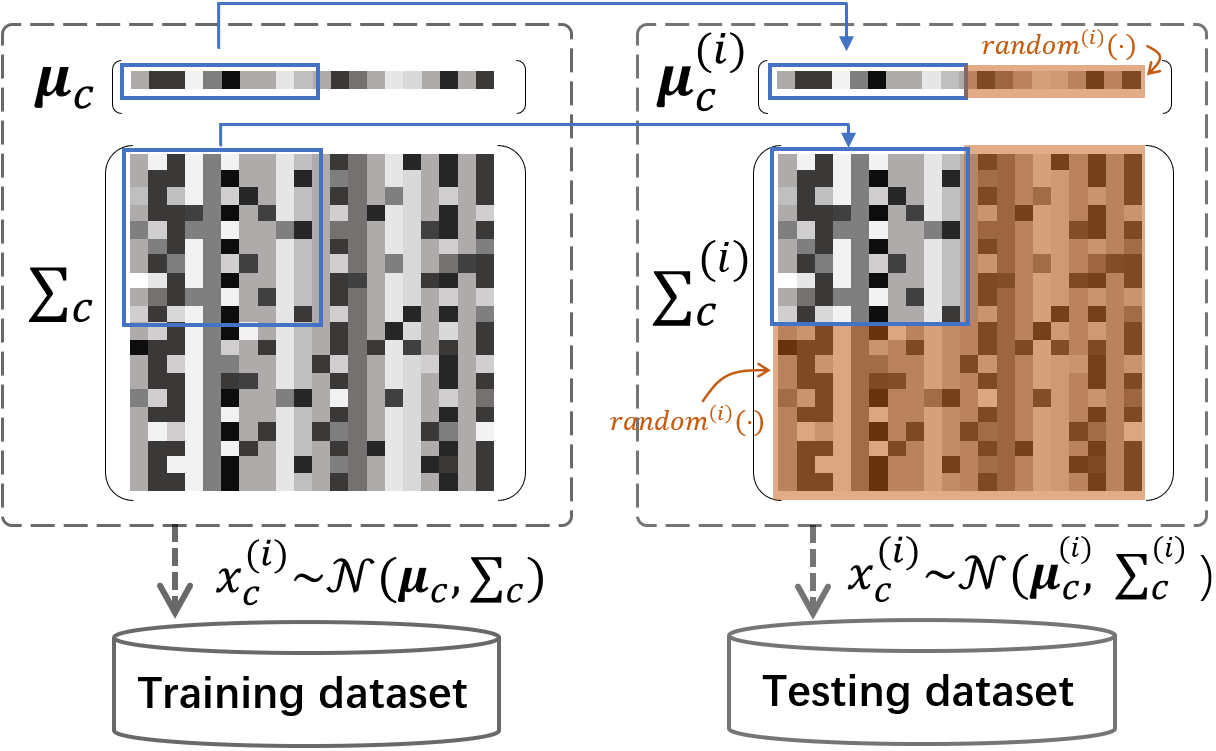}
\caption{Illustration to generate a toy dataset satisfying ITID assumption.}
\label{fig3}
\end{figure}
\subsubsection{Validation of Influence Measure}
Given task-uncorrelated generative variable set $G_U$, the target is to validate whether $H_S(Y|G_U^{(i)})$ is a good measure to estimate the real influence of $G_U^{(i)}\in G_U$ on the model predictions.
In the above toy task, for each of the last 10-d dimensions,  $H_S(Y|G_U^{(i)})$ can be readily calculated from the training dataset.
Regarding the real influence, it is recognized that the weights in the generalized linear models represent the significance of the corresponding input features on model prediction. Therefore, following the training process introduced in the experimental setting, we learned a model $h_{original}$ and use the absolute weight values corresponding to the last 10-d input features as the ground-truth influence.
For both the values of $H_S(Y|G_U^{(i)})$ and the absolute weights in $h_{original}$, we order the task-uncorrelated generative variables from the highest to the lowest to get the estimated rank and ground-truth influence rank respectively.

\begin{figure*}[!t]
\centering
\subfloat[]{\includegraphics[width=2.2in]{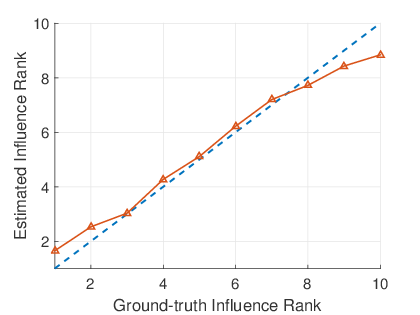}\label{toy-a}}
\hfil
\subfloat[]{\includegraphics[width=2.2in]{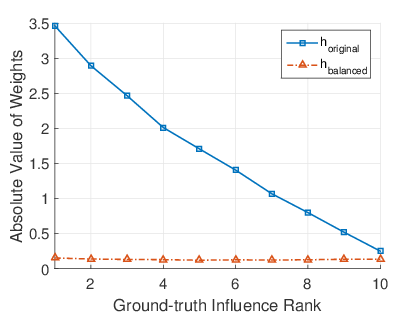}\label{toy-b}}
\hfil
\subfloat[]{\includegraphics[width=2.2in]{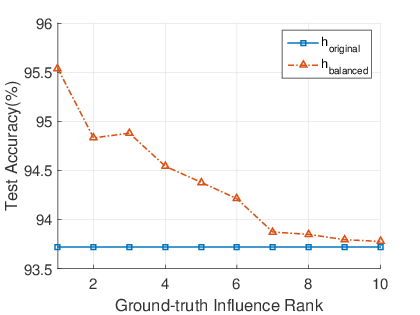}\label{toy-c}}

\caption{Validation experimental results averaged over 100 runs: (a) the estimated influence ranks  by $H_S(Y|G_U^{(i)})$ (y-axis) v.s. the ground-truth influence ranks by the absolute weight values; (b) the changes of the absolute weight values before and after \emph{Balance} operation; (c) test accuracy before and after \emph{Balance} operation.}
\label{toy}
\end{figure*}

Fig.3(a) compares the estimated ranks (y-axis) to the ground-truth influence ranks (x-axis). It can be seen the estimated influence rank curve is close to the reference line $y=x$, which means that $H_S(Y|G_U^{(i)})$ accurately approximates the real influence of task-uncorrelated generative variables.
Note that the estimated-influence rank slightly deviates from the reference line when the influence rank is lower than $8$. This result is possibly due to the very small influence of these input features (the influence weights are generally between $[0,0.8]$ for rank $8\sim 10$), which makes their relative ranks unstable in different toy datasets.

\subsubsection{Validation of Invariance Acquisition}
\label{above}
According to Theorem \ref{thm6}, the models can obtain invariance over task-uncorrelated generative variables which are statistically independent with the other generative variables and labels.
To validate this, for each of the $10$ task-uncorrelated generative variables, we modify the training data distributions to make it statistically independent, and then examine its influence change before and after the modification.
Specifically, given task-uncorrelated generative variables $G_U^{(i)}$, we implement the \emph{Balance} operation by substituting the corresponding input dimension with a random value sampled from the uniform distribution over $[0,1]$. The new model is trained on this modified training dataset to derive model $h_{balanced}$. To avoid the intervene between different generative variables, we conduct the above process for each individual task-uncorrelated generative variable and record its weight value in model $h_{balanced}$.

Fig.3(b) shows the absolute weight values learned in models before and after balancing modification (annotated as $h_{original}$ and $h_{balanced}$, respectively). To guarantee a consistent experimental discussion, we keep the x-axis as the same in Fig.\ref{toy-a}. It is shown that in spite of the very different weight values in the original model, all task-uncorrelated generative variables obtain the model weight of almost 0 after balanced modification. This validates Theorem \ref{thm6} that  the \emph{Balance} operation can help model acquire invariance over target generative variables.

\subsubsection{Validation of Generalization Improvement}

According to Theorem \ref{thm3} and Theorem \ref{thm4}, a model will have lower generalization bound when enhancing invariance over task-uncorrelated generative variables.
To validate this, following the same settings in the above experiments, we examine and compare the performance of the trained $h_{original}$ and $h_{balanced}$ in the testing dataset.
The result is shown in Fig.3(c). It can be seen that after balancing the task-uncorrelated generative variables, the learned models achieve consistently improved testing performance. Moreover, the more influence the task-uncorrelated generative variable (with higher influence rank), the more improvement $h_{balanced}$ can achieve after enhancing the variance over it. This further demonstrates the theoretically proved relationship between invariance enhancement and generalization performance.

\subsection{Validation on Real-world Dataset}
\label{s6}
\subsubsection{Experimental Settings}
\label{es}
In addition to the validations in the toy dataset, we also conducted experiments on real-world image classification tasks with deep learning models.
Specifically,  we select the CIFAR-10 dataset\cite{krizhevsky2009learning}, consisting of 60,000 $32\times32$ color images in 10 classes, with 6,000 images per class. Among the 60,000 images, 50,000 constitutes the training set and the rest 10,000 constitutes the testing set.
To avoid the biased conclusion from certain model, we report experimental results using three commonly-used CNN structures: VGG-16\cite{simonyan2014very}, ResNet-20\cite{he2016deep}, DenseNet-40-12-BC\cite{huang2017densely}.
The model hyperparameters are empirically set: the batch size is 128,  the weight decay is $1e^{-4}$, the learning rate is 0.1 at the beginning and changes to 0.01/0.001 at 150/225 epochs respectively.

\begin{figure*}[!t]
\centering
\subfloat[VGG]{\includegraphics[width=2in]{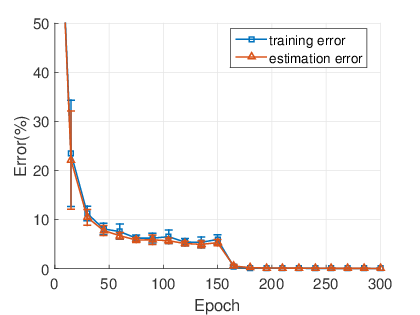}\label{confidence-a}}
\hfil
\subfloat[ResNet]{\includegraphics[width=2in]{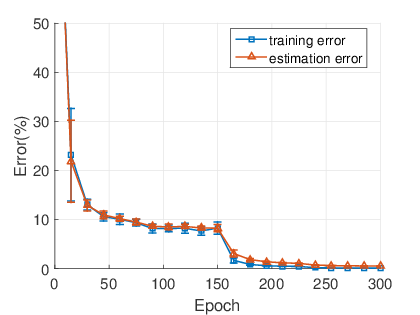}\label{confidence-b}}
\hfil
\subfloat[DenseNet]{\includegraphics[width=2in]{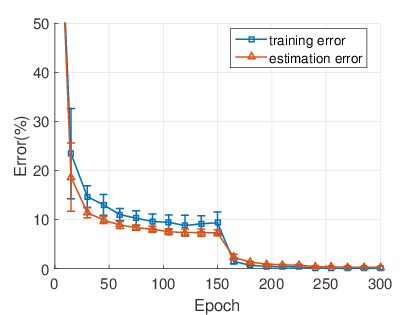}\label{confidence-c}}
\caption{Validation of Corollary \ref{cor2}: the curve of true training errors and errors estimated by the output values according to Eqn.\eqref{eq-cor2}.}
\label{confidence}
\end{figure*}
\subsubsection{Validation of Influence Measure}
\label{vt5}
In practical tasks, it is usually not easy to explicitly determine all generative variables. We turn to validate the following corollary from Theorem \ref{thm5} to indirectly validate the influence measure conclusion.

\begin{cor}\label{cor2}
Given the optimal hypothesis $h'$ satisfying Theorem \ref{thm5}, we have the training error
\begin{equation}\label{eq-cor2}
L_{S}(h')=1-\mathbb{E}_{x\in S}\max_{y\in \mathcal{Y}}{Q'_y(x)}
\end{equation}
where $\mathbb{E}_{x\in S}$ is the expectation over $S$.
\end{cor}
The proof of Corollary \ref{cor2} can be seen in Appendix-F.
Corollary \ref{cor2} shows that if Theorem \ref{thm5} is satisfied, the mean of the maximum output values can accurately estimate the training accuracy.
Fig.\ref{confidence} compares the curves of the estimated training errors calculated according to Corollary \ref{cor2} with the curves of true training errors during the training process. All three examined model structures show consistently similar tendencies between these two curves. Note that training error is regularly calculated according to model predictions instead of specific output values. Therefore, with the observed close relationship  between training errors and maximum output values, we can validate Corollary \ref{cor2} and also Theorem \ref{thm5}  in real-world tasks to some extent.

The experimental results are shown in Fig.\ref{confidence}. One interesting observation is that the estimated error curve and true error curve are close to each other during the whole training process. According to Corollary \ref{cor2}, we only expect the closeness when convergence after training.  One possible reason is that the generative variable set that model determines dynamically changes during the training process. Given temporal determining generative variable set  $G_l$,  CNN models can quickly obtain minimum cross-entropy losses to obtain an optimal hypothesis that is strict-$G_l$-determining. Therefore, the continuous closeness between the two curves demonstrates the updated determining/invariant generative variable set of the optimal hypothesis.

This explanation can also be supported by the experimental results in recent CNN visualization studies\cite{Zeiler2013Visualizing}. It is shown that the CNN convolutional filters converge first on the lower layers and then to the higher layers. As the converged filters are only sensitive to some specific patters, we can roughly regard these specific patterns as generative variables. Therefore, the converged lower layers actually first learn invariance over certain generative variables and the higher layers then optimize this invariant generative variable set during the training process.  In this way, deep models can gradually obtain invariance over as many task-uncorrelated generative variables as possible to improve generalization performance.

\begin{table*}[!t]
\renewcommand{\arraystretch}{1.3}
\caption{Distribution setting of $P_1$: the distribution of task-uncorrelated generative variables is depend on the label.}
\label{ab}
\centering
\begin{tabular}{|c|c|c||c|c|c|}
\hline
labels    &$G_U^{(1)}: [a,b]$  & $G_U^{(2)}:[a,b]$ & labels & $G_U^{(1)}: [a,b]$ & $G_U^{(2)}: [a,b]$  \\
\hline
0   &$[0,\frac{1}{3}]$   &$[0,\frac{1}{3}]$    &5  &$[\frac{1}{3},\frac{2}{3}]$ &$[\frac{2}{3},1]$ \\
\hline
1   &$[0,\frac{1}{3}]$   &$[\frac{1}{3},\frac{2}{3}]$    &6  &$[\frac{2}{3},1]$ &$[0,\frac{1}{3}]$ \\
\hline
2   &$[0,\frac{1}{3}]$   &$[\frac{2}{3},1]$    &7  &$[\frac{2}{3},1]$ &$[\frac{1}{3},\frac{2}{3}]$ \\
\hline
3   &$[\frac{1}{3},\frac{2}{3}]$   &$[0,\frac{1}{3}]$    &8  &$[\frac{2}{3},1]$ &$[\frac{2}{3},1]$ \\
\hline
4   &$[\frac{1}{3},\frac{2}{3}]$   &$[\frac{1}{3},\frac{2}{3}]$    &9  &$[0,0]$ &$[0,0]$ \\
\hline

\end{tabular}
\end{table*}

\begin{figure}[!t]
\centering
\includegraphics[width=2.5in]{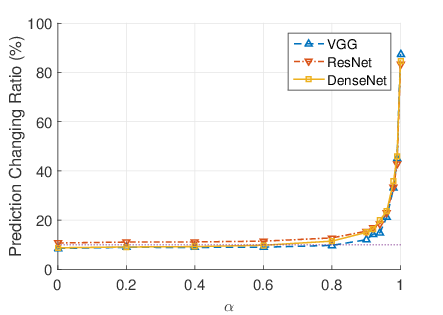}
\caption{Validation of invariance acquisition: the influence of dependent task-uncorrelated generative variable (x-axis) on model prediction changing (y-axis)}
\label{influence-a}
\end{figure}

\begin{figure}[!t]
\centering
\includegraphics[width=2.5in]{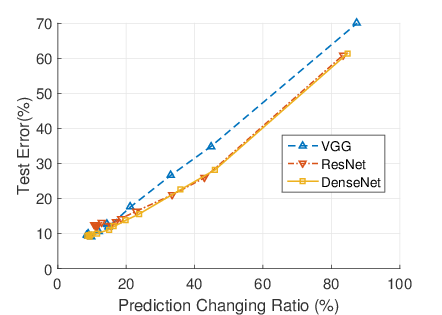}
\caption{Validation of generalization improvement: the test errors (y-axis) increase as the influence of dependent task-uncorrelated generative variable (x-axis) increases.}
\label{influence-b}
\end{figure}

\begin{figure*}[!t]
\centering
\subfloat[Distribution]{\includegraphics[width=1.7in]{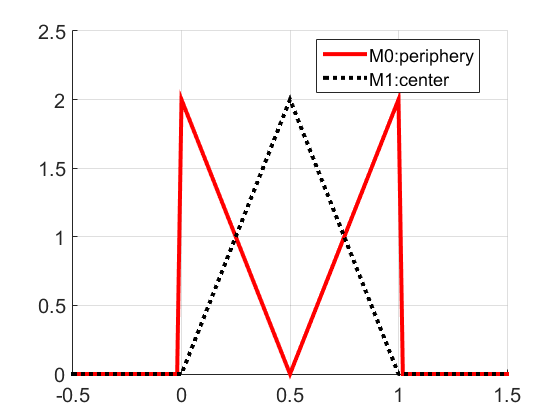}\label{improve-dis}}
\hfil
\subfloat[VGG]{\includegraphics[width=1.8in]{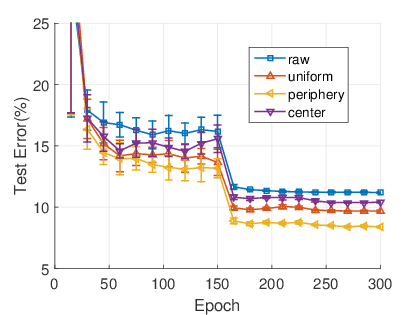}\label{improve-a}}
\hfil
\subfloat[ResNet]{\includegraphics[width=1.8in]{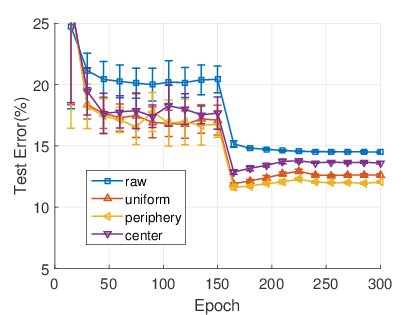}\label{improve-b}}
\hfil
\subfloat[DenseNet]{\includegraphics[width=1.8in]{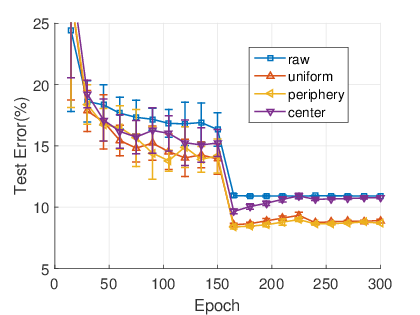}\label{improve-c}}
\caption{Test errors of different random-erasing settings.}
\label{improve}
\end{figure*}

\subsubsection{Validation of Invariance Acquisition}
\label{above2}
Theorem \ref{thm6} proves that the optimal models will have invariance over the statistically independent task-uncorrelated generative variables.
We consider employing a data augmentation method to introduce new task-uncorrelated generative variables whose parameters can be obtained and manipulated easily.
In this way, we can control the parameter distributions as statistically independent or dependent on the labels $Y$, to validate the effect of statistical independence on model invariance acquisition.

We use random-erasing\cite{zhong2017random} as the example data augmentation method, and choose its two parameters as the task-uncorrelated generative variables to be modified: the area and the aspect ratio of the erasing rectangles, which are denoted as $G_U^{(1)}$ and $G_U^{(2)}$.
Two reference distribution settings $P_0, P_1$ are designed to respectively represent the case of full independence and dependence.
 As for $P_0$, $G_U^{(1)}$ and $G_U^{(2)}$ are independent both with the labels $Y$ and with each other:  $G_U^{(1)} \sim Uniform(0,1)$, $G_U^{(2)}\sim Uniform(0,1)$.
As for $P_1$,  $G_U^{(1)}$ and $G_U^{(2)}$ still follow the uniform distribution $Uniform(a,b)$, but the distribution parameter $[a,b]$ is dependent on the labels $Y$. Table \ref{ab} lists the allocated values of $[a,b]$ for different labels, e.g., when sample label is 0, we have $G_U^{(1)}  \sim Uniform(0,\frac{1}{3})$, $G_U^{(2)} \sim Uniform(0,\frac{1}{3})$. To examine the effect of different levels of dependence, we define the final distribution as the mixture of the two reference settings:
\begin{equation}
P_{\alpha}=\left\{
\begin{aligned}
P_0 &,\ & with\ probability&\ & 1-\alpha  \\
P_1 &,\ & with\ probability&\ & \alpha
\end{aligned}
\right.
\end{equation}
where $\alpha \in [0,1]$ controls the level of dependence.

Regarding the measure of model invariance, we examine how much the new models learned from augmented training dataset change the predicted labels. Specifically,  we randomly apply random-erasing to the original training set where the augmented samples follow $P_{\alpha}$, and employ the learned new model to output predictions for the testing instances. The above process repeats 100 times and we calculate the \emph{prediction changing ratio} as how many of the testing instances obtain predicted labels different from the original model.
The results are shown in Fig.\ref{influence-a}. It is shown that model tends to change its prediction when $\alpha$ increases for all three examined CNN structures, indicating that highly dependent task-uncorrelated generative variables will lead to higher influence on the model prediction. This validates our conclusion for invariance acquisition that statistical independence is desired for the task-uncorrelated generative variables to ensure the model invariance.

Two more discussions on Fig.\ref{influence-a}:
(1) When $\alpha=0$, we can find that there is still about $10\%$ chance to change the model predictions, rather than $0\%$ indicating complete model invariance.
Such a phenomenon might be because that while the added two new generative variables are controlled to be statistically independent and have no influence to model output, as discussed in Section 4.4, random-erasing yet modifies the distribution of some existing generative variables like occlusion so as to change the model prediction.
(2) The prediction changing ratio non-trivially increases only when $\alpha>0.8$, i.e., the task-uncorrelated generative variables are heavily dependent on the labels. This indicates that while statistical independence theoretically guarantees full model invariance, not-too-heavy dependence is adequate to maintain a stable model prediction to some extent. This observation is very important to practical implementations: in real-world tasks, it is very difficult to balance a task-uncorrelated generative variable to be strictly independent, the goal can thus be simplified to weaken the dependence of task-uncorrelated generative variables. The effectiveness of random-erasing can also be explained from this perspective: although the augmented dataset cannot guarantee full independence between the task-uncorrelated generative variable of occlusion and the labels, the introduced erasing masks succeed to weaken this dependence and contribute to observable performance improvement.

\subsubsection{Validation of Generalization Improvement}
The above experiment derives different levels of invariance by adjusting $P_{\alpha}$. Following the same setting, we examine the relationship between generalization performance and model invariance over the task-uncorrelated generative variables. Fig.\ref{influence-b} shows the curve of test error versus prediction changing ratio. It is shown that the generalization performance is positively correlated with the model invariance, which validates the generalization improvement conclusion in real-world tasks. Moreover, among the three examined CNN structures, we observe that ResNet and DenseNet achieve better generalization performance than VGG when task-uncorrelated generative variables strongly influence the model prediction (i.e., prediction changing ratio$>20\%$). One possible reason is that ResNet and DenseNet are equipped with stronger mechanisms to address the dependent task-uncorrelated generative variables, which also explains their recognized superior performance in many tasks. We will be working in future studies how to apply the proposed generalization theory from a model design perspective to obtain invariance over the task-uncorrelated generative variables.

\subsection{Improvement for Data Augmentation}
Note that the proposed generalization theory is based on the ITID assumption, which requires a unique marginal distribution over the task-correlated generative variables.
However, many data augmentation solutions like random erasing are potential to change the distribution of task-correlated generative variables, which decrease the generalization performance.
In this subsection, we conduct a simple experiment to reduce this potential to affect task-correlated generative variables and examine its relationship to the generalization performance.

By examining the Cifar-10 dataset, we find that the main objects usually appear in the center of the images.
This inspires us to modify the distribution of erasing positions to appear beyond the center and help avoid affecting the task-correlated generative variables.
Specifically, two parameters control the erasing position in random erasing, horizontal and vertical coordinates $G_U^{(3)}$ and $G_U^{(4)}$\footnote{Horizontal and vertical coordinates are represented as the ratio to the length and width of image: $G_U^{(3)}, G_U^{(4)}\in [0,1]$}.
In traditional random erasing, the erasing position evenly distributes over the whole image:  $G_U^{(3)},G_U^{(4)}\sim Uniform(0,1)$.
To realize the idea of not affecting the task-correlated generative variables that usually appear near the center, we design a new distribution $M_0$ with probability density function $p_{m0}(x)=4|x-0.5|$.
To facilitate the comparison, we also design another distribution $M_1$ with probability density function $p_{m1}(x)=2-4|x-0.5|$. The probability density curves of $M_0$ and $M_1$ are shown in Fig.\ref{improve}-a. It is obvious that the erasing following $M_0$ will appear more at the periphery, while erasing the following $M_1$ will appear more at the center of images and have a higher probability to change the marginal distribution of task-correlated generative variables.

Fig.\ref{improve}(b)-(d) show the test error in four experimental settings: (1) \emph{raw}, the original training set without data augmentation; (2) \emph{uniform}, the random erasing method with traditional uniform distribution on the erasing position; (3) \emph{periphery}, the random erasing method with position parameters following $M_0$; and (4) \emph{center}, the random erasing method with position parameters following $M_1$.  As can be seen, when restricting the erasing positions near the periphery area, the performance of \emph{periphery} consistently outperforms the other random erasing settings. While, \emph{center} achieves inferior performance than \emph{uniform}, further validating that affecting the distribution of task-correlated generative variables will violate the ITID assumption and tends to deteriorate the generalization performance.

As a result, we summarize two guidelines from the proposed generalization theory in data augmentation design and other methods optimizing data distributions: (1) balancing the task-uncorrelated generative variables to reduce their dependence with other generative variables and the label; (2) not affecting the distribution of task-correlated generative variables to maintain the task property.

\section{Conclusion and Future Work}
\label{s7}
 In this work, we introduce intermediate generative variables into the data generating process, which considers the correlation with solved tasks to establish a new ITID assumption. The derived generalization bound based on the ITID assumption shows that gaining a model with invariant task-uncorrelated generative variables is very important for good generalization. Applying our theory can get an algorithm to improve the generalization performance, and interpret the mechanism of the data augmentation. The experiments on a toy dataset and Cifar-10 dataset show that our theory can improve the generalization performance in practice.

 The future work can be divided into theoretical and application directions. The future theoretical directions include: (1) Obtaining invariance to task-uncorrelated generative variables shares many commonnesses with the information bottleneck theory \cite{tishby2015deep}\cite{shwartz2017opening}. It will be interesting to analyze the layer-wise transformation in deep neural networks from the perspective of hypothesis determination update on generative variables. (2)  In this paper, the generalization bound is derived based on Gibbs classifiers. Applying Bayesian classifiers\cite{Mcallester1999PAC} has the potential to derive a tighter generalization bound. (3) The current generalization theory is formalized and applied in the context of single-type input, e.g., image in this paper. It is worth extending it to address learning tasks with multi-type input, .e.g., recommendation tasks with input $X={user, item}$ and label $Y={rate}$.

 The future application directions include: (1)  Obtaining invariance to task-uncorrelated generative variables provides theoretically guaranteed objective to model data augmentation design as an optimization problem. We will also instantiate the task-uncorrelated and -correlated generative variables in real-world tasks and dataset by combining with the existing generative models (e.g., Generative Adversarial Network).  (2) Regarding the way of modifying data distribution for invariance enhancement, data sampling solutions can also be explored in addition to the data augmentation algorithm introduced in this paper. It is noted that the focal loss can be viewed as implicitly increasing invariance to task-uncorrelated generative variables via data sampling. (3) From the perspective of generative variables, the adversarial example phenomenon can be explained that the adversarial perturbation changes task-uncorrelated generative variables and model is heavily influenced by those variables. Therefore, we can apply the new generalization theory to improve adversarial robustness by either modifying data distribution or developing models determining more on task-correlated generative variables.

\ifCLASSOPTIONcaptionsoff
  \newpage
\fi

\bibliographystyle{IEEEtran}
\bibliography{IEEEabrv,mybib}

\appendix

\section*{A  Proof of Theorem \ref{thm2}}
\label{ap2}
\begin{thm}\label{thm2}
Suppose that hypothesis space $\mathcal{H}$ is $G_{Y}$-determining, $\forall \epsilon,\delta \in(0,1)$, for any hypothesis $h \in \mathcal{H}$, with probability of at least $1-\delta$, we have
\begin{equation}
\left | L_{S}(h)-L_{D}(h) \right |\leq \sqrt{\frac{2(TKln2+ln(1/\delta))}{n}}
\end{equation}
where $n$ is the size of the training set $S$, $K$ is the number of labels in the label space $\mathcal{Y}$, and $T$ is the number of different variable configurations of $\boldsymbol{G_Y}$.
\end{thm}
\begin{proof}
According to the ITID assumption, all the distributions which the data are sampled from have the same marginal distribution $D(\boldsymbol{G_Y},Y)$. At the same time, $G_Y$-determining means that the predictions will be identical with the same value of $\boldsymbol{G_Y}$, so we can use $\psi_{h}$ to represent the predictions from hypothesis $h$ for $\boldsymbol{G_Y}$. As a result, we can get the expected risk by using the distribution $D(\boldsymbol{G_Y},Y)$.
\begin{equation}
L_{D}(h)=1-\sum_{\boldsymbol{G_Y}}D(\boldsymbol{G_Y},\psi_{h}(\boldsymbol{G_Y}))
\end{equation}

Suppose that $\boldsymbol{G_Y}$ have $T$ different values: $\{g_i\}_{i=1..T}$, $Y$ have $K$ different labels: $\{l_j\}_{j=1..K}$, $\tau_i=\psi_{h}(g_i)$, then the expected risk can be written as:
\begin{equation}
L_{D}(h)=1-\sum_{i=1}^{T}D(g_i,\tau_i)
\end{equation}
The empirical risk can be written as:
\begin{equation}
L_{S}(h)=1-\sum_{i=1}^{T}\frac{N_{g_i,\tau_i}}{n}
\end{equation}
where $N_{g_i,\tau_i}$ represents the number of samples that have $\boldsymbol{G_Y}=g_i$ and $Y=\tau_i$, and $n$ is the size of training set.
We further have:
\begin{equation}\label{gap}
\begin{split}
\left | L_{S}(h)-L_{D}(h) \right |\\
&=\left | \sum_{i=1}^{T}\frac{N_{g_i,\tau_i}}{n}-\sum_{i=1}^{T}D(\boldsymbol{G_Y}=g_i,Y=\tau_{i}) \right |\\
&\leq \sum_{i=1}^{T}\left | \frac{N_{g_i,\tau_i}}{n}-D(g_i,\tau_{i}) \right |\\
&\leq \sum_{i=1}^{T}\sum_{j=1}^{K}\left | \frac{N_{g_i,l_j}}{n}-D(g_i,l_j) \right |\\
\end{split}
\end{equation}
Note that the choice of the hypothesis $h$ only affected $\tau_{i}$, and the above new upper bound is independent of $\tau_{i}$, so theoretically this conclusion can fit to all possible hypothesis.
Considering that the random vector $(N_{g_i,l_j})$ is multinomial distributed with parameters $n$ and $D(g_i,l_j)$, then according to the Breteganolle-Huber-Carol inequality
(Proposition A6.6 of \cite{Vaart2013Weak}), we have
\begin{equation}
\begin{split}
&Pr \left \{\sum_{i=1}^{T}\sum_{j=1}^{K}\left | \frac{N_{g_i,l_j}}{n}-D(g_i,l_j) \right |  \geq \lambda \right \} \\
&\leq 2^{TK}exp(\frac{-n\lambda^{2}}{2})
\end{split}
\end{equation}
Hence, the following holds with probability of at least $1-\delta$
\begin{equation}
\begin{split}
\sum_{i=1}^{T}\sum_{j=1}^{K}\left | \frac{N_{g_i,l_j}}{n}-D(g_i,l_j) \right | \\
\leq \sqrt{\frac{2(TKln2+ln(1/\delta))}{n}}
\end{split}
\end{equation}
Combing with Eqn.~\eqref{gap} ,we finally get
\begin{equation}
\begin{split}
\left | L_{S}(h)-L_{D}(h) \right |&\leq \sum_{i=1}^{T}\sum_{j=1}^{K}\left | \frac{N_{g_i,l_j}}{n}-D(g_i,l_j) \right |\\
&\leq \sqrt{\frac{2(TKln2+ln(1/\delta))}{n}}
\end{split}
\end{equation}

\end{proof}

\section*{B Proof of Theorem \ref{thm3}}
\label{ap3}
\begin{thm}\label{thm3}
Suppose hypothesis space $H$ is $\gamma$-$G_{U}$-dependence: if $n\geq m_{\mathcal{H}}(\frac{\epsilon}{2},\delta)$, with probability of at least $1-\delta$, $\forall h \in \mathcal{H}$ we have
\begin{equation}
L_{D}(h)\leq \min\limits_{h'\in \mathcal{H}} L_{D}(h')+\epsilon+\frac{\gamma}{log2}
\end{equation}
where $m_{\mathcal{H}}$ is derived from any uniform convergency bound under $G_Y$-determining constraint: if $n\geq m_{\mathcal{H}}(\epsilon,\delta)$, with probability of at least $1-\delta$, we have
\begin{equation}
\left | L_{S}(h)-L_{D}(h) \right |\leq \epsilon
\end{equation}
\end{thm}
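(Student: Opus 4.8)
The plan is to reduce the $\gamma$-$G_U$-dependence case to the $G_Y$-determining case by way of a controlled approximation. For a given hypothesis $h$ with prediction $\hat Y$, the constraint $H(\hat Y\mid \boldsymbol{G_Y})\le\gamma$ says that, conditioned on $\boldsymbol{G_Y}$, the prediction is almost deterministic. First I would introduce, for each $h$, a ``rounded'' hypothesis $\tilde h$ that predicts, for each configuration of $\boldsymbol{G_Y}$, the most probable label under $h$; by construction $\tilde h$ is $G_Y$-determining, so Theorem~\ref{thm2} (or, more generally, any uniform-convergence bound with sample complexity $m_{\mathcal{H}}$ valid under the $G_Y$-determining constraint) applies to $\tilde h$. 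The key quantitative step is to bound $|L_D(h)-L_D(\tilde h)|$ and $|L_S(h)-L_S(\tilde h)|$ in terms of $\gamma$: the probability that $h$ disagrees with $\tilde h$ on a fresh example equals the expected ``residual mass'' $\mathbb{E}_{\boldsymbol{G_Y}}\bigl[1-\max_y \mathbb{P}(\hat Y=y\mid\boldsymbol{G_Y})\bigr]$, and one relates this quantity to the conditional entropy $H(\hat Y\mid\boldsymbol{G_Y})\le\gamma$ via a standard Fano-type / entropy-vs-error inequality (for binary predictions, $1-\max_y p_y \le \tfrac{1}{2}H_b(p)$ in nats, giving the factor $1/\log 2$; in general one invokes the bound $\mathbb{P}_e \le H(\hat Y\mid\boldsymbol{G_Y})/\log 2$ type estimate). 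This yields $|L_D(h)-L_D(\tilde h)|\le \gamma/\log 2$ and similarly for the empirical risk.

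With that in hand, the argument is the usual three-term telescoping for agnostic-PAC-style bounds. Let $h^\star=\arg\min_{h'\in\mathcal{H}} L_D(h')$ and let $\tilde h^\star$ be its rounding. Then
\begin{equation}
L_D(h)\le L_S(h)+\tfrac{\epsilon}{2}+\bigl(L_D(\tilde h)-L_S(\tilde h)\bigr)\text{-type terms},
\end{equation}
but more cleanly: apply uniform convergence to $\tilde h$ with accuracy $\epsilon/2$ (legitimate since $n\ge m_{\mathcal{H}}(\epsilon/2,\delta)$), so $L_D(\tilde h)\le L_S(\tilde h)+\epsilon/2$ and $L_S(\tilde h^\star)\le L_D(\tilde h^\star)+\epsilon/2$ simultaneously with probability $1-\delta$. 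Chain these together with $L_D(h)\le L_D(\tilde h)+\gamma/\log 2$, $L_S(\tilde h)\le L_S(h)\le L_S(h^\star)\le L_S(\tilde h^\star)+\gamma/\log 2$ (using empirical risk minimization or, if $h$ is arbitrary, bounding $L_S(h)$ appropriately), and $L_D(\tilde h^\star)\le L_D(h^\star)+\gamma/\log 2$. Collecting the approximation errors and the two $\epsilon/2$ terms gives $L_D(h)\le L_D(h^\star)+\epsilon+\gamma/\log 2$, which is exactly Eqn.~\eqref{12}.

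The main obstacle I anticipate is making the entropy-to-error conversion tight enough to land the constant $1/\log 2$ rather than something weaker, and handling the fact that $\hat Y$ need not be a clean function of $\boldsymbol{G_Y}$ (so ``rounding'' requires care about which probability space the conditional distribution lives on, and about whether the same rounded hypothesis works uniformly over $\mathcal{H}$ for the uniform-convergence invocation). A secondary subtlety is that the statement quantifies over \emph{all} $h\in\mathcal{H}$ while also comparing to $\min_{h'} L_D(h')$; I would need the uniform-convergence event (for the family of rounded hypotheses) to hold simultaneously for $\tilde h$ and $\tilde h^\star$, which is fine since uniform convergence is, by definition, uniform over the class — but one should check that the rounded class has sample complexity no larger than the $G_Y$-determining bound $m_{\mathcal{H}}$, which holds because every rounded hypothesis is itself $G_Y$-determining and lies in (an extension of) $\mathcal{H}$'s determined behavior.
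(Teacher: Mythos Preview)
Your approach is essentially the paper's: round each $h$ to the $G_Y$-determining map $\tilde h(\cdot)=\arg\max_{\hat y}P(\hat Y=\hat y\mid \boldsymbol{G_Y})$, control the disagreement probability via an entropy inequality, invoke the uniform-convergence bound for $\tilde h$, and chain with ERM. Two places need tightening to land the stated constant.

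First, the entropy--to--error step. The paper isolates it as a lemma: for any discrete distribution, $\max_y P(y)\ge 1-H(Y)/(2\log 2)$ (proved by reducing to the binary case and using concavity of the binary entropy on $[1/2,1]$). Averaging over $\boldsymbol{G_Y}$ then gives
\[
P_A\;:=\;\mathbb E_{\boldsymbol{G_Y}}\!\bigl[1-\max_{\hat y}P(\hat Y=\hat y\mid\boldsymbol{G_Y})\bigr]\;\le\;\frac{\gamma}{2\log 2},
\]
not $\gamma/\log 2$. Your per-term bound $|L_D(h)-L_D(\tilde h)|\le\gamma/\log 2$ is twice too loose; with that constant, no chaining will recover $\gamma/\log 2$ in the final inequality.

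Second, the chaining. The step $L_S(\tilde h)\le L_S(h)$ is unjustified---rounding need not decrease empirical risk---and your longer chain through both $\tilde h$ and $\tilde h^\star$ accumulates three or four $P_A$ terms. The paper instead argues directly that $|L_S(h)-L_D(h)|\le \epsilon/2+P_A$ for every $h\in\mathcal H$ (by comparing $h$ with its own $\tilde h$), and then runs the standard two-step ERM telescope
\[
L_D(h)\le L_S(h)+\bigl(\tfrac{\epsilon}{2}+P_A\bigr)\le L_S(h')+\bigl(\tfrac{\epsilon}{2}+P_A\bigr)\le L_D(h')+2\bigl(\tfrac{\epsilon}{2}+P_A\bigr),
\]
which gives exactly $\epsilon+\gamma/\log 2$. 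Collapsing your chain in the same way (only two invocations of the approximation, once for $h$ and once for the ERM competitor) is what recovers the constant; note also that the paper implicitly uses ERM for $h$ in the middle inequality, despite the ``$\forall h$'' in the statement.
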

\noindent Before proving theorem, we first derive the following lemma.

\begin{lemma}\label{lm1}
Suppose that $P(Y)$ is a probability distribution defined on the $K$ label space $\mathcal{Y}$, $\sum_{i=1}^{K}{P(Y=y_i)}=1$, and
\begin{equation}\label{ch}
H(Y)=\sum_{i=1}^{K}{-P(Y=y_i) logP(Y=y_i)}
\end{equation}
then we have:
\begin{equation}
\max\limits_{Y}P(Y)\geq 1-\frac{H(Y)}{2log2}
\end{equation}
\end{lemma}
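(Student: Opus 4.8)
The plan is to prove the contrapositive-style bound by directly relating the largest probability mass $p_{\max} := \max_Y P(Y)$ to the entropy $H(Y)$. The key observation is that the entropy is maximized, for a fixed value of $p_{\max}$, when the remaining mass $1 - p_{\max}$ is spread as uniformly as possible over the other outcomes; but rather than chase that optimization, I would use a cleaner one-shot inequality. First I would group the label space into the single most likely outcome (with probability $p_{\max}$) and everything else (with total probability $1 - p_{\max}$). By the grouping/chain rule for entropy, $H(Y) \leq H_b(p_{\max}) + (1 - p_{\max}) \log(K-1)$, where $H_b$ is the binary entropy; but since we want a bound free of $K$, I would instead bound each term of the entropy sum individually.

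The cleanest route: for every outcome $y_i$ we have $P(Y=y_i) \leq p_{\max}$, hence $-\log P(Y=y_i) \geq -\log p_{\max}$, so
\begin{equation}
H(Y) = \sum_i -P(Y=y_i)\log P(Y=y_i) \geq -\log p_{\max}.
\end{equation}
This already gives $p_{\max} \geq 2^{-H(Y)}$ (in base-2 logs, i.e. $p_{\max} \geq e^{-H(Y)/\log 2}$-style), but that is not the linear bound claimed. To get the stated \emph{linear} bound $p_{\max} \geq 1 - \frac{H(Y)}{2\log 2}$, I would instead use the inequality $-t \log t \leq (1-t)\log 2 \cdot \mathbf{1} + \ldots$ — more precisely, the sharper approach is to bound the contribution of the non-maximal outcomes. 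Write $q = 1 - p_{\max}$. The terms other than the maximal one contribute $\sum_{i \neq \max} -P(Y=y_i)\log P(Y=y_i)$, and the maximal term contributes $-p_{\max}\log p_{\max} \geq 0$. I would show $\sum_{i\neq\max} -P_i \log P_i \geq$ something like $q \cdot (\text{const})$ is the wrong direction; what I actually want is a \emph{lower} bound on $H(Y)$ in terms of $q$, so I would lower-bound $-p_{\max}\log p_{\max} \geq$ (nothing useful when $p_{\max}\to 1$). Instead, observe $-p_{\max}\log p_{\max} = -(1-q)\log(1-q) \geq q\log 2$ holds precisely when $1-q = p_{\max} \geq 1/2$, since $g(q) := -(1-q)\log(1-q) - q\log 2$ satisfies $g(0)=0$ and $g(1/2) = (1/2)\log 2 - (1/2)\log 2 = 0$ with $g$ concave. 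So on $q \in [0, 1/2]$, $-p_{\max}\log p_{\max} \geq q \log 2$, whence $H(Y) \geq -p_{\max}\log p_{\max} \geq (1-p_{\max})\log 2 \geq 2(1-p_{\max})\log 2$? — that last step fails, so the correct constant must come from combining the maximal term's bound with a symmetric bound on the second-largest term.

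So the real argument I would commit to: let $p_{\max}$ be the largest mass and bound $H(Y) \geq -p_{\max} \log p_{\max} - (1-p_{\max})\log(1-p_{\max}) = H_b(p_{\max})$, using that merging outcomes only decreases entropy. Then I need the elementary inequality $H_b(t) \geq 2(1-t)\log 2$ for $t \in [1/2, 1]$, equivalently $H_b(t) \geq 2\log 2 \cdot (1-t)$; checking endpoints, at $t=1/2$: $H_b(1/2) = \log 2$ and $2\log2\cdot(1/2) = \log 2$, equality; at $t=1$: both sides $0$, equality; and concavity of $H_b$ versus linearity of the right side gives the inequality in between. This yields $H(Y) \geq 2\log 2\,(1 - p_{\max})$, i.e. $p_{\max} \geq 1 - \frac{H(Y)}{2\log 2}$, as claimed (the bound is vacuous but still true when $p_{\max} < 1/2$, since then the right side is $< 1/2 \leq $ trivial). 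The main obstacle is handling the regime $p_{\max} < 1/2$ cleanly — there the binary-entropy comparison $H_b(t) \geq 2\log2(1-t)$ reverses, so I would dispatch that case separately by noting $H(Y) \geq H_b(p_{\max}) \geq H_b(1/2) = \log 2 \geq 2\log2(1-p_{\max})$ is false too; instead simply observe that when $p_{\max} < 1/2$ we need $H(Y) \geq 2\log2(1-p_{\max}) > \log 2$, which holds because any distribution with no mass exceeding $1/2$ has $H(Y) \geq$ ... this requires a touch more care (e.g. $H(Y) \ge -\log p_{\max} > \log 2$ when $p_{\max} < 1/2$, but we need $\ge 2\log2(1-p_{\max})$ which can approach $2\log 2$). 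The safe fix is: the claimed bound's right-hand side $1 - \frac{H(Y)}{2\log2}$ is already $\le 1/2$ whenever $H(Y) \ge \log 2$, and $p_{\max} \ge 1/K$; combined with the always-valid $-\log p_{\max} \le H(Y)$ giving $p_{\max} \ge e^{-H(Y)/\log 2}$, one checks $e^{-x} \ge 1 - x/2$ fails for large $x$ — so ultimately the honest proof restricts to $p_{\max}\ge 1/2$ where $H_b$ concavity closes it, and observes the inequality is trivially true otherwise because the RHS is then negative or handled by $H(Y) > \log 2 \ge 2\log 2(1-p_{\max})$ once $p_{\max}$ is bounded below appropriately. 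I would present the concave-comparison step as the heart of the proof and treat the low-$p_{\max}$ case as a short remark.
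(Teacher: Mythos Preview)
Your committed argument --- lower-bound $H(Y)$ by the binary entropy $H_b(p_{\max})$ (via ``merging outcomes only decreases entropy''), then use concavity of $H_b$ on $[1/2,1]$ to get the chord inequality $H_b(t)\ge 2\log 2\,(1-t)$ --- is exactly the paper's proof; the paper just phrases the first step as an explicit constrained minimization that lands on the same $t(p_1)=-p_1\log p_1-(1-p_1)\log(1-p_1)$, and dispatches $H(Y)\ge 2\log 2$ as trivial up front.

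On the $p_{\max}<1/2$ regime you flag as unresolved: the paper does not handle it any more carefully than you do --- it simply asserts $0.5\le p_1\le 1$ after restricting to $H(Y)<2\log 2$, an implication that fails (e.g.\ the uniform distribution on three labels has $H=\log 3<2\log 2$ yet $p_1=1/3$). The one-line fix, which you nearly reach before abandoning it, is: when $p_{\max}<1/2$, use the bound you already recorded, $H(Y)\ge -\log p_{\max}$, and check directly that $-\log p\ge 2\log 2\,(1-p)$ on $(0,1/2]$. Indeed, the difference $g(p)=-\log p-2\log 2\,(1-p)$ satisfies $g(1/2)=0$ and $g'(p)=-1/p+2\log 2<0$ for $p\le 1/2$, so $g$ is decreasing there and hence $g(p)\ge g(1/2)=0$. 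That closes the case cleanly; your instinct that it is a ``short remark'' was right, you just needed to pair $-\log p_{\max}$ with this elementary inequality rather than with $e^{-x}\ge 1-x/2$.
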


\begin{proof}
Note that $0\leq P(Y)\leq1$, when $H(Y)\geq 2log2$, the conclusion is trivial. So we focus on the case when $0\leq H(Y)\leq 2log2$.
For simplicity, we denote $p_i=P(Y=y_i)$, and have
\begin{equation}\label{13}
H(Y)=\sum_{i=1}^{K}{-p_i logp_i}\leq 2log2
\end{equation}
Without loss of generality, let $p_1$ be the maximum of $\{p_i\}_{i=1..K}$, the problem transfers to find the lower bound of $p_1$.

In the following, we first obtain the minimum of $H(Y)$ when $p_1$ is determined, then use the minimum to prove the Eqn.~\eqref{13}.

(1) obtaining the minimum of $H(Y)$ when $p_1$ is given:
\begin{equation}
\label{raw}
\begin{split}
t(p_1)&=\min\limits_{p_2,p_3,...,p_K}{H(Y)}\\
&s.t. \\
&\sum_{i=1}^{K}{p_i}=1 \\
&0\leq p_i \leq 1-p_1, i=2,3,...,K-1
\end{split}
\end{equation}
In simpler terms, we use the constraint to substitute $p_K$, we have
\begin{equation}
\begin{split}
t(p_1)&=H(Y)=\sum_{i=1}^{K-1}{-p_i logp_i}-(1-\sum_{i=1}^{K-1}{p_i})log(1-\sum_{i=1}^{K-1}{p_i})\\
&s.t. \\
&0\leq p_i \leq 1-p_1, i=2,3,...,K-1
\end{split}
\end{equation}
As $-H(Y)$ is convex and derivable, the minimal of $H(Y)$ must be the boundaries. By examining all the boundary points, we can get one of the minimal:

\begin{equation}
\begin{split}
&p_i=0,i=2,3,...,K-1
\end{split}
\end{equation}

Note that such substitution ignores the constraint of $p_K$, so we should validate whether the minimal meets the raw constraints. As $p_K=1-p_1$ meets the constraints, it is also the minimal of Equation \ref{raw}, and
\begin{equation}
\begin{split}
t(p_1)=-p_1logp_1-(1-p_1)log(1-p_1)
\end{split}
\end{equation}

(2)
Note that $t(p_1)$ is concave and $0.5\leq p_1\leq 1$, we have
\begin{equation}
\begin{split}
t(p_1)\geq (1-p_1)\times2log2
\end{split}
\end{equation}
then
\begin{equation}
\begin{split}
&t(p_1)\leq H(Y)\\
&\Rightarrow 2log2(1-p_1) \leq H(Y)\\
&\Rightarrow p_1 \geq 1-\frac{H(Y)}{2log2}
\end{split}
\end{equation}
so
\begin{equation}
\begin{split}
\max\limits_{Y}P(Y)=p_1\geq 1-\frac{H(Y)}{2log2}
\end{split}
\end{equation}
\end{proof}

\noindent With Lemma 1, we can started to prove Theorem \ref{thm3}.
\begin{proof}
\emph{to Theorem 2}

Since the hypothesis space $\mathcal{H}$ is $\gamma$-$G_Y$-determining, we have
\begin{equation}
\sum_{\boldsymbol{G_Y}}{P(\boldsymbol{G_Y})\sum_{\hat{Y}}{-P(\hat{Y}|\boldsymbol{G_Y})logP(\hat{Y}|\boldsymbol{G_Y})}}\leq \gamma
\end{equation}
which denotes that
\begin{equation}
\Lambda(\boldsymbol{G_Y})=\sum_{\hat{Y}}{-P(\hat{Y}|\boldsymbol{G_Y})logP(\hat{Y}|\boldsymbol{G_Y})}
\end{equation}
then
\begin{equation}
\sum_{\boldsymbol{G_Y}}{P(\boldsymbol{G_Y})\Lambda(\boldsymbol{G_Y})}\leq \gamma
\end{equation}
and
\begin{equation}
\Theta(\boldsymbol{G_Y})=1-\max\limits_{\hat{Y}}{P(\hat{Y}|\boldsymbol{G_Y})}
\end{equation}
According to lemma \ref{lm1}, we have
\begin{equation}
\Theta(\boldsymbol{G_Y})\leq \frac{\Lambda(\boldsymbol{G_Y})}{2log2}
\end{equation}
so

\begin{equation}
P_A=\sum_{\boldsymbol{G_Y}}{P(\boldsymbol{G_Y})\Theta(\boldsymbol{G_Y})}\leq \sum_{\boldsymbol{G_Y}}{P(\boldsymbol{G_Y})\frac{\Lambda(\boldsymbol{G_Y})}{2log2}}\leq \frac{\gamma}{2log2}
\end{equation}

Note that $\arg\max\limits_{\hat{Y}}P(\hat{Y}|G_Y)$ is a $G_Y$-determining model, so the predictions are different from such model with probability $P_A$. Then we can get the following bound:

if $n\geq m_{\mathcal{H}}(\frac{\epsilon}{2},\delta)$, with probability of at least $1-\delta$, we have
\begin{equation}
\left | L_{S}(h)-L_{D}(h) \right |\leq \frac{\epsilon}{2}+\frac{\gamma}{2log2}
\end{equation}
Then, as the learning algorithm is based on empirical risk minimization, we have:
\begin{equation}
\begin{split}
L_{D}(h) &\leq L_{S}(h)+\frac{\epsilon}{2}+\frac{\gamma}{2log2} \leq L_{S}(h')+\frac{\epsilon}{2}+\frac{\gamma}{2log2} \\
&\leq L_{D}(h')+2(\frac{\epsilon}{2}+\frac{\gamma}{2log2})
\end{split}
\end{equation}
Therefore we derive the final conclusion:

if $n\geq m_{\mathcal{H}}(\frac{\epsilon}{2},\delta)$, with probability of at least $1-\delta$, we have
\begin{equation}
L_{D}(h)\leq \min\limits_{h'\in \mathcal{H}} L_{D}(h')+\epsilon+\frac{\gamma}{log2}
\end{equation}
\end{proof}

\section*{C Proof of Theorem \ref{thm4}}
\label{ap4}
\begin{thm}\label{thm4}
Suppose that $\hat{Y}$ is the predictions of hypothesis $h$, and we have $N$ task-uncorrelated variables: $G_U=\{G_U^{(i)}\}_{i=1..N}$. If
\begin{equation}
\gamma=\sum_{i=1}^{N}{I(\hat{Y};G_U^{(i)}|\boldsymbol{G_Y})}
\end{equation}
then $h$ is $\gamma$-$G_{U}$-dependence. Where $I$ represents the mutual information.
\end{thm}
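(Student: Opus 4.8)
The plan is to reduce the statement to a subadditivity inequality for conditional mutual information. By Definition~\ref{def7}, asserting that $h$ is $\gamma$-$G_U$-dependent is the same as asserting $H(\hat Y\mid\boldsymbol{G_Y})\le\gamma$, so it is enough to prove $H(\hat Y\mid\boldsymbol{G_Y})\le\sum_{i=1}^N I(\hat Y;G_U^{(i)}\mid\boldsymbol{G_Y})$. First I would rewrite the left-hand side as a mutual information: since the prediction $\hat Y=h(\phi(\boldsymbol G))$ is a deterministic function of the full generative vector $\boldsymbol G=(\boldsymbol{G_Y},\boldsymbol{G_U})$, we have $H(\hat Y\mid\boldsymbol{G_Y},\boldsymbol{G_U})=0$, and hence
\[
H(\hat Y\mid\boldsymbol{G_Y})=H(\hat Y\mid\boldsymbol{G_Y})-H(\hat Y\mid\boldsymbol{G_Y},\boldsymbol{G_U})=I(\hat Y;\boldsymbol{G_U}\mid\boldsymbol{G_Y}),
\]
which is the mutual-information form already recorded in the footnotes of Definitions~\ref{def6} and~\ref{def7}. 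So the whole theorem is equivalent to the bound $I(\hat Y;\boldsymbol{G_U}\mid\boldsymbol{G_Y})\le\sum_{i=1}^N I(\hat Y;G_U^{(i)}\mid\boldsymbol{G_Y})$.

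Next I would expand the joint term by the chain rule for mutual information. Writing $\boldsymbol{G_U^{<i}}$ for $(G_U^{(1)},\dots,G_U^{(i-1)})$,
\[
I(\hat Y;\boldsymbol{G_U}\mid\boldsymbol{G_Y})=\sum_{i=1}^N I\!\left(\hat Y;G_U^{(i)}\mid\boldsymbol{G_Y},\boldsymbol{G_U^{<i}}\right),
\]
so it remains to establish the term-by-term comparison $I(\hat Y;G_U^{(i)}\mid\boldsymbol{G_Y},\boldsymbol{G_U^{<i}})\le I(\hat Y;G_U^{(i)}\mid\boldsymbol{G_Y})$ for each $i$; summing these over $i$ then bounds $I(\hat Y;\boldsymbol{G_U}\mid\boldsymbol{G_Y})$ by $\gamma$, and combining with the reformulation above completes the argument.

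I expect the main obstacle to be exactly that per-term step, because conditional mutual information is \emph{not} monotone under additional conditioning in general. The tool I would reach for is the standard interchange identity
\[
I(\hat Y;G_U^{(i)}\mid\boldsymbol{G_Y},\boldsymbol{G_U^{<i}})-I(\hat Y;G_U^{(i)}\mid\boldsymbol{G_Y})=I(\hat Y;\boldsymbol{G_U^{<i}}\mid\boldsymbol{G_Y},G_U^{(i)})-I(\hat Y;\boldsymbol{G_U^{<i}}\mid\boldsymbol{G_Y}),
\]
which turns the needed inequality into the statement that conditioning further on $G_U^{(i)}$ does not increase the information $\hat Y$ carries about the earlier variables $\boldsymbol{G_U^{<i}}$. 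Verifying this is where the structure of the generative-variable data-generating process of Definition~\ref{def4} and Assumption~\ref{def5} must be used — concretely, the (conditional) independence among the generative variables — since without some such hypothesis a parity-type example such as $\hat Y=G_U^{(1)}\oplus G_U^{(2)}$ (with $\boldsymbol{G_Y}$ trivial) already makes the left-hand side strictly larger. Once that independence is made precise, each per-term inequality follows, their sum bounds $I(\hat Y;\boldsymbol{G_U}\mid\boldsymbol{G_Y})$ by $\gamma$, and with the first-step identity we conclude $H(\hat Y\mid\boldsymbol{G_Y})\le\gamma$, i.e.\ $h$ is $\gamma$-$G_U$-dependent.
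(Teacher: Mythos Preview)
Your reduction is exactly the paper's: the paper's proof writes
\[
\gamma=\sum_{i=1}^{N}I(\hat Y;G_U^{(i)}\mid\boldsymbol{G_Y})\ \ge\ I(\hat Y;\boldsymbol{G_U}\mid\boldsymbol{G_Y})
=H(\hat Y\mid\boldsymbol{G_Y})-H(\hat Y\mid\boldsymbol{G_Y},\boldsymbol{G_U}),
\]
notes that $H(\hat Y\mid\boldsymbol{G_Y},\boldsymbol{G_U})=0$ because $\hat Y$ is a function of all generative variables, and concludes $H(\hat Y\mid\boldsymbol{G_Y})\le\gamma$. That is the entire argument in the paper; the subadditivity step $\sum_i I(\hat Y;G_U^{(i)}\mid\boldsymbol{G_Y})\ge I(\hat Y;\boldsymbol{G_U}\mid\boldsymbol{G_Y})$ is asserted without justification.

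Where you differ from the paper is that you actually try to \emph{prove} that inequality via the chain rule and a per-term comparison, and you correctly flag that it fails in general (your parity example is decisive). You then propose to rescue it by invoking ``the (conditional) independence among the generative variables'' from Definition~\ref{def4} and Assumption~\ref{def5}. Be careful here: those statements do not supply the independence you need. Definition~\ref{def4} just factors the data-generating process through $\mathcal G$, and Assumption~\ref{def5} only asserts that different \emph{exemplars} $d_i$ are drawn independently and share the marginal $D(\boldsymbol{G_Y},Y)$; neither constrains the joint law of $G_U^{(1)},\dots,G_U^{(N)}$ \emph{within} a single exemplar, nor their dependence on $\hat Y$ given $\boldsymbol{G_Y}$. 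So the hypothesis you plan to lean on is not actually in the paper.

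In short: your strategy coincides with the paper's, and you are more rigorous in recognizing that the key inequality is not free. The gap you point out is genuine and is present in the paper's own proof as well; as stated, the theorem seems to require an additional assumption (e.g.\ conditional independence of the $G_U^{(i)}$ given $\boldsymbol{G_Y}$, or some redundancy bound) that the paper does not make explicit.
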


\begin{proof}
\begin{equation}
\begin{split}
\gamma&=\sum_{i=1}^{N}{I(\hat{Y};G_U^{(i)}|\boldsymbol{G_Y})}\geq I(\hat{Y};\boldsymbol{G_U}|\boldsymbol{G_Y})\\
&=H(\hat{Y}|\boldsymbol{G_Y})-H(\hat{Y}|\boldsymbol{G_Y},\boldsymbol{G_U})
\end{split}
\end{equation}
Note that $(\boldsymbol{G_Y},\boldsymbol{G_U})$ represents all the generative variables, so we have $H(\hat{Y}|\boldsymbol{G_Y},\boldsymbol{G_U})=0$. Then
\begin{equation}
\begin{split}
H(\hat{Y}|\boldsymbol{G_Y})\leq \gamma
\end{split}
\end{equation}
Therefore, $h$ is $\gamma$-$G_Y$-determining.
\end{proof}

\section*{D Proof of Theorem \ref{thm5}}
\label{ap5}
\begin{thm}\label{thm5}
Suppose that the hypothesis class $\mathcal{H}$ is strict-$G_l$-determining, and the optimal hypothesis $h'\in\mathcal{H}$ is obtained by cross-entropy loss with empirical risk minimization. The output values $(Q_0^{'},Q_1^{'},...,Q_K^{'})$ of the optimal hypothesis $h'(x)$ under such settings will be
\begin{equation}
Q_y^{'}(x)=\widetilde{P}_S(Y=y|\boldsymbol{G_l}=G_l(x))
\end{equation}
where $\widetilde{P}_S$ represents the empirical distribution of training set $S$.
\end{thm}
\begin{proof}
The cross entropy loss on training set $S$ is:
\begin{equation}
CELoss=\frac{1}{n}\sum_{(x,y)\in S}{-logQ_y(x)}
\end{equation}

As $h$ is strict-$G_l$-determining, it will provide a constant output value for the same $G_l$ and the same label. We use $\psi_{h}:\mathcal{G}_l \times \mathcal{Y}\mapsto [0,1]$ to denote the mapping. Note that $Q_y(x)$ is the $y^{th}$ output value of $h$ when the input is $x$, we have:
\begin{equation}
\begin{split}
Q_y(x)=\psi_{h}(G_l(x),y)
\end{split}
\end{equation}
We have another alternative representation of cross entropy loss:
\begin{equation}
\begin{split}
CELoss&=\sum_{\boldsymbol{G_l}}{\sum_Y{-\widetilde{P}_S(\boldsymbol{G_l},Y)log\psi_h(\boldsymbol{G_l},Y)}}\\
&=\sum_{\boldsymbol{G_l}}{\widetilde{P}_S(\boldsymbol{G_l})\sum_Y{-\widetilde{P}_S(Y|\boldsymbol{G_l})log\psi_h(\boldsymbol{G_l},Y)}}
\end{split}
\end{equation}
with the following constraints:
\begin{equation}
\begin{split}
&\sum_Y{\widetilde{P}_S(Y|\boldsymbol{G_l})}=1 \\
&\sum_Y{\psi_h(\boldsymbol{G_l},Y)}=1
\end{split}
\end{equation}

As minimizing cross entropy loss is a convex optimization problem, we can obtain minima when gradients are all zero:
\begin{equation}\label{solution}
\begin{split}
\psi_h(\boldsymbol{G_l},Y)=\widetilde{P}_S(Y|\boldsymbol{G_l})
\end{split}
\end{equation}
So the optimal hypothesis will be:
\begin{equation}
Q_y^{'}(x)=\psi_{h'}(G_l(x),y)=\widetilde{P}_S(Y=y|\boldsymbol{G_l}=G_l(x))
\end{equation}
\end{proof}

\section*{E Proof of Theorem \ref{thm6}}
\begin{thm}\label{thm6}
Suppose that the hypothesis class $\mathcal{H}$ is strict-$G_l$-determining, and obtain the optimal hypothesis $h'\in\mathcal{H}$ is obtained by cross-entropy loss with empirical risk minimization. Given a generative variable subset $G_t\subset G_l$, we have the sufficient condition for that the optimal $h'$ is strict-$G_t$-invariance on the training set $S$:

 $G_t$ is statistically independent with $G_t^c$ and $Y$ on the training set $S$.

\noindent where $G_t^c$ is the complementary set of $G_t$ in $G_l$.
\end{thm}
\begin{proof}
According to Theorem \ref{thm5}, we have
\begin{equation}
Q_y^{'}(x)=\widetilde{P}_S(Y=y|\boldsymbol{G_t}=G_t(x))
\end{equation}

Note that $\widetilde{P}_S(\boldsymbol{G},Y)=\widetilde{P}_S(Y|\boldsymbol{G})\widetilde{P}_S(\boldsymbol{G})$, and $G_t$ is statistically independent of $G_t^c$ and $Y$. Therefore, we have
\begin{equation}\label{1}
\begin{split}
\widetilde{P}_S(Y|\boldsymbol{G})&=\frac{\widetilde{P}_S(\boldsymbol{G},Y)}{\widetilde{P}_S(\boldsymbol{G})}=\frac{\widetilde{P}_S(\boldsymbol{G_t},\boldsymbol{G_t^c},Y)}
{\widetilde{P}_S(\boldsymbol{G_t},\boldsymbol{G_t^c})} \\
&=\frac{\widetilde{P}_S(\boldsymbol{G_t})\widetilde{P}_S(\boldsymbol{G_t^c},Y)}{\widetilde{P}_S(\boldsymbol{G_t})\widetilde{P}_S(\boldsymbol{G_t^c})}
=\frac{\widetilde{P}_S(\boldsymbol{G_t^c},Y)}{\widetilde{P}_S(\boldsymbol{G_t^c})} \\
&=\widetilde{P}_S(Y|\boldsymbol{G_t^c})
\end{split}
\end{equation}
so we have:
\begin{equation}
Q_y^{'}(x)=\widetilde{P}_S(Y=y|\boldsymbol{G}=G(x))=\widetilde{P}_S(y|G_t^c(x))
\end{equation}
the hypothesis is strict-$G_t$-invariant on the training set $S$.
\end{proof}
\section*{F Proof of Corollary 2}
\label{ac2}
\begin{cor}\label{cor2}
Given the optimal hypothesis $h'$ satisfying Theorem \ref{thm5}, we have the training error
\begin{equation}\label{eq-cor2}
L_{S}(h')=1-\mathbb{E}_{x\in S}\max_{y\in \mathcal{Y}}{Q'_y(x)}
\end{equation}
where $\mathbb{E}_{x\in S}$ is the expectation over $S$.
\end{cor}
\begin{proof}
As $h'$ satisfies Theorem \ref{thm5} and the prediction is the label with maximum output value, if the input is $x$, the hypothesis will derive the prediction:
\begin{equation}
\begin{split}
\max_{y\in \mathcal{Y}}Q'_y(x)=\max_{y\in \mathcal{Y}}\psi_{h'}(G_l(x),y)
\end{split}
\end{equation}
so the training error can be written as:
\begin{equation}
\begin{split}
L_{S}(h')&=1-\frac{1}{n}\sum_{i=1}^{n}{\mathbb{I}(\arg\max_{y\in \mathcal{Y}}Q'_y(x_i),y_i)} \\
&=1-\sum_{i=1}^{T}\widetilde{P}_S(\boldsymbol{G_l}=g_i,Y=\arg\max_{y\in \mathcal{Y}}\psi_{h'}(g_i,y)) \\
&=1-\sum_{i=1}^{T}\widetilde{P}_S(Y=\arg\max_{y\in \mathcal{Y}}\psi_{h'}(g_i,y)|\boldsymbol{G_l}=g_i)\widetilde{P}_S(\boldsymbol{G_l}=g_i) \\
&=1-\mathbb{E}_{g\in S}\widetilde{P}_S(Y=\arg\max_{y\in \mathcal{Y}}\psi_{h'}(g,y)|g)
\end{split}
\end{equation}
According to Theorem \ref{thm5}:
\begin{equation}
\begin{split}
L_{S}(h')&=1-\mathbb{E}_{g\in S}\widetilde{P}_S(Y=\arg\max_{y\in \mathcal{Y}}\psi_{h'}(g,y)|g)\\
&=1-\mathbb{E}_{g\in S}\psi_{h'}(g,\arg\max_{y\in \mathcal{Y}}\psi_{h'}(g,y))\\
&=1-\mathbb{E}_{g\in S}\max_{y\in \mathcal{Y}}\psi_{h'}(g,y)\\
&=1-\mathbb{E}_{x\in S}\max_{y\in \mathcal{Y}}{Q_y(x)}
\end{split}
\end{equation}

\end{proof}

\end{document}